%% file: main.tex
\documentclass{article}

\usepackage[dblblindworkshop, final]{neurips_2026}

\workshoptitle{TS-LIMITS}

\usepackage[utf8]{inputenc} %
\usepackage[T1]{fontenc}    %
\usepackage{hyperref}       %
\hypersetup{hidelinks}
\usepackage{url}            %
\usepackage{booktabs}       %
\usepackage{array}
\usepackage{amsfonts}       %
\usepackage{amsmath}
\usepackage{amsthm}
\usepackage{nicefrac}       %
\usepackage{microtype}      %
\usepackage{tabularx}
\usepackage{graphicx}
\usepackage{subcaption}
\usepackage{placeins}
\usepackage{xcolor}         %
\usepackage{tikz}

\newtheorem{proposition}{Proposition}

\IfFileExists{macros.tex}{\input{macros}}{}

\title{StaFIR: Convex Learning of Stationarity-Aware Causal Filters
}

\author{%
  Lorena Egger \\
  Research Engineer at Drakai Capital \\
  Paris \\
  \texttt{lorena.egger@drakaicapital.com} \\
  \And
  Mathis Linger \\
  CTO at Drakai Capital \\
  Paris \\
  \texttt{mathis.linger@drakaicapital.com} \\
}

\begin{document}

\raggedbottom

\maketitle

\input{sections/abstract}

\input{sections/introduction}
\input{sections/related_work}

\input{sections/methods}
\input{sections/experiments}
\input{sections/conclusion}

\clearpage

\bibliographystyle{unsrtnat}
\bibliography{bibliography/references}

\clearpage

\input{sections/appendix}

\end{document}

%% file: macros.tex
\newcommand{\Sstat}{S_{\mathrm{stat}}}

%% file: sections/abstract.tex
\begin{abstract}
Reducing nonstationarity in a persistent time series entails deciding how much of its temporal dependence to remove. In finance, fractional differencing is often tuned using the Augmented Dickey--Fuller (ADF) test, limiting the search to a one-parameter family of lag profiles and addressing input preservation only indirectly. We propose StaFIR, a causal finite-impulse-response filter with a learned nonnegative mixture of exponential lag profiles. Its convex learning objective balances empirical stationarity with similarity to the input. We evaluate StaFIR on ARFIMA--GARCH controlled settings and rolling financial series, including a realized-volatility forecasting task. The experiments show that StaFIR adjusts its filtering strength to persistence while limiting unnecessary transformation in stationary regimes. In downstream forecasting, there is no clear accuracy difference from fixed half-order differencing, while StaFIR achieves higher measured similarity to the raw signal. A complementary direct forecasting experiment finds that greater input similarity is associated with smaller forecasting penalties, although the raw representation remains stronger.
\end{abstract}

%% file: sections/introduction.tex
\section{Introduction}

Financial time series are commonly transformed before model training to reduce nonstationarity and improve out-of-sample performance \citep{hasslerPohle2023forecasting, DePrado2019}. However, financial markets are subject to distribution shifts and regime changes, causing the statistical properties of a time series to evolve over time. Hence, a transformation calibrated in one period can become unsuitable in another.

The choice of transformation must balance reducing nonstationarity against preserving temporal dependence that may remain useful for prediction.
Forecasting methods that restore instance statistics or temporal information after stationarization highlight the value of retaining access to information altered by preprocessing \citep{KimKTPCC22,liu2022nonstationary}. This motivates examining input preservation alongside empirical stationarity.
First-order differencing can remove stochastic trends, but may also discard persistent, low-frequency structure \citep{hosking1981fractional}. Fractional differencing generalizes conventional differencing to non-integer orders, allowing non-stationary processes to be transformed while preserving long-term dependence \citep{grangerFracDiff1980}. A common financial application selects the smallest fractional-differencing order for which the transformed series rejects the Augmented Dickey--Fuller (ADF) unit-root null, thereby limiting the loss of temporal dependence relative to first differencing \citep{DePrado2019}. However, this rule controls the balance between persistence reduction and preservation of temporal structure only indirectly, by minimizing the differencing order subject to a binary test decision. Explicitly assessing both properties would allow selection according to a continuous trade-off.

We propose StaFIR, a stationarity-aware finite-impulse-response (FIR) filter learned directly from calibration data. StaFIR is causal and preserves the sign structure of a differencing operator. The filter represents its lag profile using a nonnegative mixture of normalized exponential decays. Unlike a one-parameter fractional kernel, this representation separates the total weight placed on past observations from its allocation across short and long decay scales.
The learning objective balances input fidelity with empirical stationarity penalties and reduces to a small convex second-order cone program (SOCP) that can be solved efficiently using standard convex solvers. The StaFIR implementation and experimental scripts are publicly available at \url{https://github.com/LorenaEgger/StaFIR}.

\paragraph{Contributions.}
We make two main contributions. First, we introduce StaFIR, a causal filter that learns filtering strength and its allocation across time scales through a convex objective balancing input fidelity with empirical stationarity penalties. A contextual validation rule selects the trade-off between empirical stationarity and input similarity. Second, we evaluate StaFIR against fixed and adaptive preprocessing baselines on synthetic and financial data. In realized-volatility forecasting with recursive reintegration, StaFIR retains higher measured input similarity than fixed fractional differencing of order 0.5, with no clear difference in forecast loss across the tested horizons. We further examine sensitivity to the forecasting model and quantify recalibration costs.

%% file: sections/related_work.tex
\section{Background}

\paragraph{Stationarity and stationarity tests}
\label{sec:related_work_stat}
A real-valued stochastic process $\{X_t\}$ is weakly (or second-order) stationary if it has finite second moments, a time-invariant mean, and an autocovariance that depends only on the lag:
\begin{equation*}
  \mathbb{E}[X_t]=\mu,
  \qquad
  \mathrm{Cov}(X_t,X_{t+h})=\gamma_X(h)
  \qquad \forall t,h,
\end{equation*}
where $\gamma_X(h)$ denotes the autocovariance function at lag $h$. Stationarity is commonly assessed using three complementary tests. The Augmented Dickey--Fuller (ADF) and Phillips–Perron (PP) tests assess unit-root null hypotheses \citep{said1984testing, phillips1988testing}, whereas KPSS assesses stationarity under a short-memory calibration \citep{kwiatkowski1992testing}. KPSS can reject covariance-stationary fractionally integrated alternatives with $0<d<0.5$~\citep{leeSchmidt1996kpss}. Consequently, rejection alone does not establish failure of weak stationarity. The local Whittle estimator provides a continuous diagnostic of persistence \citep{robinson1995gaussian}.

\paragraph{Nonstationarity mitigation}
Approaches to nonstationarity differ in which aspect of the series they modify or model. Preprocessing transformations include logarithms, detrending, moving averages, exponential moving averages, integer differencing, and fractional differencing. Decomposition methods such as STL \citep{cleveland1990stl}, wavelets \citep{Mallat1989wavelets,Nason2000waveletProcesses}, empirical mode decomposition \citep{Huang1998emd}, and state-space or Kalman filtering \citep{Kalman1960,DurbinKoopman2012} separate a signal into trend, seasonal, latent, or residual components. More recent machine-learning approaches learn representations designed to mitigate distribution shift, including reversible normalization and frequency-domain stationarity-aware representations \citep{KimKTPCC22,piao2026tifotimeinvariantfrequencyoperator}. We focus on causal preprocessing transformations applied directly to the observed series, particularly differencing-based approaches.

\paragraph{Gr\"unwald--Letnikov (GL) fractional differencing}

Fractional differencing can be expressed using the Grünwald--Letnikov (GL) expansion \citep{podlubny1999fractional}. The same formulation is used in ARFIMA models and in fractional-differencing preprocessing for financial time series \citep{DePrado2019}. For $d\in(0,1]$, let $B$ denote the one-step backshift operator, $BX_t=X_{t-1}$. The fractional difference is given by the binomial expansion
\begin{equation}
  (1-B)^dX_t = \sum_{k=0}^{\infty}h_k^{\mathrm{GL}}(d)X_{t-k},
  \qquad
  h_k^{\mathrm{GL}}(d)=(-1)^k\binom{d}{k}.
  \label{eq:fracdiff_gl}
\end{equation}
The expansion is causal and reduces to ordinary first differencing when $d=1$. For $0<d<1$, the coefficients on past observations are negative for $k\geq1$ and decay slowly in magnitude according to a power law. Consequently, fractional differencing can attenuate persistent low-frequency variation more gradually than first differencing while retaining more dependence across lags. Truncation, initialization, and efficient-approximation schemes for finite-width GL filters are studied in \citet{Rosas2025,Maitra_fracdiff_2025,OMAY2026117813,jensen2014}.

%% file: sections/methods.tex
\section{Methods}
\label{sec:methods}

\paragraph{The Stationarity-Aware FIR (StaFIR) filter}

Let $x_{1:T}=(x_1,\ldots,x_T)$ denote the observed time series, and let $K$ be the maximum lag. For $t>K$, we transform it with the causal finite-impulse-response (FIR) filter
\begin{equation*}
  y_t=\sum_{k=0}^{K}h_kx_{t-k}.
\end{equation*}
The $K+1$ filter coefficients weight the current observation and its previous $K$ lags. We restrict the current-observation coefficient to $h_0=1$ and the lag coefficients to $h_k\leq0$ for $k=1,\ldots,K$. The finite-horizon GL fractional-differencing filter of Eq.~\eqref{eq:fracdiff_gl} belongs to this sign-constrained class and has a lag profile determined by a single fractional order. StaFIR instead combines \(J\) exponentially decaying lag patterns with learned nonnegative weights. For $j=1,\ldots,J$, let $\tau_j$ denote a candidate decay scale and define $\rho_j=\exp(-1/\tau_j)$. The filter coefficients are
\begin{equation*}
   h_0(w) = 1, \qquad h_k(w) = -\sum_{j=1}^{J} w_j \frac{(1-\rho_j)\rho_j^{k-1}}{1-\rho_j^K}, \qquad k = 1,\dots,K,
\end{equation*}
with $w\geq0$ and $\sum_{j=1}^{J} w_j\leq1$. For each scale $\tau_j$, the factor multiplying \(w_j\) defines an exponentially decaying profile across the \(K\) past lags, normalized so that the magnitudes of its coefficients sum to one. The nonnegative weight $w_j$ controls the contribution of that decay scale. Consequently, the aggregate magnitude of the lag coefficients satisfies
\begin{equation*}
    \sum_{k=1}^{K} |h_k(w)|
    =
    \sum_{j=1}^{J} w_j
    \frac{1-\rho_j}{1-\rho_j^K}
    \sum_{k=1}^{K} \rho_j^{k-1}
    =
    \sum_{j=1}^{J} w_j
    \leq 1
    =
    h_0(w).
\end{equation*}
The value $\sum_{j=1}^{J} w_j$ also controls the strength of the transformation: if it is zero, nonnegativity implies $w=0$ and hence $y_t=x_t$, while larger values subtract more of the past from the current observation. The relative values of the $w_j$ determine the time scales over which this subtraction is distributed.

Over a finite FIR horizon, the GL kernel admits an exact continuous mixture of
normalized exponential profiles \citep{LeonenkoPodlubny2026}. StaFIR restricts
this representation to a fixed finite dictionary. Appendix~\ref{ap:survival_bridge}
proves the exact mixture identity and reports the reconstruction error using our nine-scale dictionary.
The dictionary is not restricted to GL shapes: learned weights can instead
concentrate on shorter decay scales and produce more rapidly decaying lag
profiles.

\paragraph{Learning the filter by convex optimization}

For an unstandardized input $\widetilde x_t$, define
$ x_t=\frac{\widetilde x_t-\widehat\mu_{\mathrm{tr}}}
{\widehat s_{\mathrm{tr}}}$,
where $\widehat\mu_{\mathrm{tr}}$ and $\widehat s_{\mathrm{tr}}$ are estimated from the corresponding training segment. Let $n=T-K$, let $y(w)\in\mathbb R^n$ collect the valid filtered outputs, and let $x=(x_{K+1},\ldots,x_T)$ denote the aligned input. Split the first $Im$ outputs into $I=\lfloor n/m\rfloor$ non-overlapping windows and write observation $t$ in window $i$ as $y_{i,t}(w)$.  
\begin{equation*}
\widehat v_i(w)
=\frac{1}{m}\sum_{t=1}^{m}
\bigl(y_{i,t}(w)-\bar y_i(w)\bigr)^2
\qquad
\widehat g_{i,\ell}(w)
=\frac{1}{m-\ell}\sum_{t=1}^{m-\ell}
\bigl(y_{i,t+\ell}(w)-y_{i,t}(w)\bigr)^2 .
\end{equation*}
For a set of positive lags $\mathcal L$, we combine these quantities into the worst-window local second-order stability penalty
\begin{equation}
  \mathcal R_{\mathrm{loc}}(w)
  =\max_{1\leq i\leq I}\widehat v_i(w)
  +\frac{1}{|\mathcal L|}\sum_{\ell\in\mathcal L}
    \max_{1\leq i\leq I}\widehat g_{i,\ell}(w).
  \label{eq:second-order-regularizer}
\end{equation}
The first term penalizes windows with large variation around their local mean, while the second penalizes large mean-squared differences between observations separated by $\ell$ time steps. Both controls are deliberately one-sided: they suppress unusually large local second-order variation but do not penalize windows for having unusually small variance or lagged differences. Consequently, $\mathcal R_{\mathrm{loc}}$ may miss nonstationarity expressed through shifts toward lower-variance regimes, but this one-sided design preserves convexity.

A second penalty discourages slow drift by penalizing the cumulative sums of the filtered output, which grow when the output diverges from zero:
\begin{equation}
  \mathcal R_{\mathrm{slow}}(w)
  =\frac{1}{n^2}\sum_{t=1}^{n}
    \left(\sum_{u=1}^{t}y_u(w)\right)^2 .
  \label{eq:kpss-regularizer}
\end{equation}
This partial-sum device is borrowed from the KPSS construction~\citep{kwiatkowski1992testing}. We use only the raw sums, without the long-run-variance normalization of the test statistic, because the resulting term is convex quadratic in $w$ (Appendix~\ref{ap:program_properties}).

For fixed penalty weights $\lambda_s,\lambda\geq0$, we learn the filter from the convex program
\begin{align}
  \underset{w}{\operatorname{minimize}}\quad
  &\underbrace{\frac{1}{n}\|y(w)-x\|_2^2}_{\text{input-fidelity loss}}
  +\lambda_s\underbrace{\mathcal R_{\mathrm{slow}}(w)}_{\text{slow drift control}}
  +\lambda\underbrace{\mathcal R_{\mathrm{loc}}(w)}_{\text{local second-order control}}
  \label{eq:socp-objective}\\
  \text{s.t.}\quad
  &w\geq0,\qquad \sum_{j=1}^{J} w_j\leq1.
  \label{eq:mass-cap}
\end{align}
The fidelity term discourages transformations stronger than necessary for our objective. The penalties weighted by $\lambda_s$ and $\lambda$ target persistent low-frequency movements and extreme local second-order behavior, respectively. Validation selects $(\lambda_s,\lambda)$, avoiding scale-dependent hard constraints. Because $h(w)$ is affine in $w$, the program of Eqs.~\eqref{eq:socp-objective}--\eqref{eq:mass-cap} is second-order cone representable for every fixed $(\lambda_s,\lambda)$. The epigraph formulation and proof can be found in Appendix~\ref{ap:socp_convexity}.

\paragraph{Validation scores and contextual selection}
\label{sec:scores}

We combine complementary measures into two continuous scores for validation and evaluation. This captures multiple aspects of empirical stationarity and input similarity, including discrepancies during extreme episodes and nonlinear dependence that overall linear correlation can miss. The stationarity score $\Sstat$ summarizes stability in local mean, variance, and lagged dependence, while the similarity score $S_{\mathrm{sim}}$ summarizes dependence between the transformed and original series, including their tails. Higher values indicate greater empirical stability and similarity over the evaluation window. We use “empirical stationarity” operationally to describe stability of local sample moments. These moments fluctuate even under weak stationarity, and reducing temporal dependence can improve the score without correcting time variation in population moments. Appendix~\ref{ap:score_definitions} gives the complete definitions.

We use a contextual validation criterion that favors configurations preserving similarity to the input while penalizing shortfalls below a data-dependent stationarity target. This target is calibrated from the stationarity scores of the raw series and its first differences, with \(\zeta\) specifying the fraction of the observed improvement to require. For StaFIR, we apply this criterion to the candidate pairs \((\lambda_s,\lambda)\) and refer to the resulting validation-selected procedure as \texttt{StaFIR-context}. Its formal specification is provided in Appendix~\ref{ap:empirical-rule}.

%% file: sections/experiments.tex
\section{Experiments and results}
\label{sec:experiments}
\label{sec:metrics}

We evaluate StaFIR in three settings: controlled long-memory processes,
rolling daily log prices of four financial series, and realized-volatility
forecasting on the same four markets. The market series are the S\&P 500, EUR/USD, a five-year
crossover credit index, and Bitcoin. Across all performance experiments, methods are configured 
using calibration data and evaluated on held-out data. Where applicable, hyperparameters are 
selected on a validation subset before refitting on the full calibration period. 
In rolling experiments, this procedure is repeated at each origin. Data spans, splits, 
candidate grids, and implementation details are reported in Appendix~\ref{ap:experimental_details}.

We compare StaFIR with the raw input, first differencing, an exponentially
weighted moving-average residual (\texttt{EWMA}), a rolling z-score, and three
fractional-differencing (FD) baselines. \texttt{FD-adf} 
selects the smallest $d$ for which the ADF test rejects the unit-root null \citep{DePrado2019}.
\texttt{FD-context} applies our contextual selection rule to fractional differencing, 
choosing $d$ from the candidate grid according to the validation series’ stationarity and similarity scores.
\texttt{FD(0.5)} fixes $d=0.5$, a robust choice for forecasting realized
volatility \citep{hasslerPohle2023forecasting}. For the downstream task, we additionally include \texttt{log-HAR}, a strong classical baseline for realized-volatility forecasting \citep{corsi2009har}. It predicts log realized variance from its current value and weekly and monthly averages, fitting a separate direct model at each forecast horizon. Appendices~\ref{ap:baseline_details}
and~\ref{ap:downstream-task} give the baseline details.

All FIR transformations use maximum lag $K=100$. For both \texttt{StaFIR-context} and \texttt{FD-context}, we fix the contextual target fraction at $\zeta=0.9$ in all main experiments. We restrict attention to
nonnegative fractional-integration orders, $d_{\mathrm{true}}\geq0$, i.e., persistent
 processes. For an output $y$ and its aligned input $x$, we report the stationarity gain
$\Delta S_{\mathrm{stat}}=S_{\mathrm{stat}}(y)-S_{\mathrm{stat}}(x)$ and the
similarity cost $C_{\mathrm{sim}}=100-S_{\mathrm{sim}}(x,y)$. Throughout the results, ''stationarity gain'' denotes improvement in this empirical score and does not by itself establish a reduction in population nonstationarity. Effect turnover
$T_y$ measures how much the realized transformation changes between consecutive
rolling origins. Appendix~\ref{ap:score_definitions} gives the score definitions
and effect turnover.

\subsection{Adaptation to the level of memory on synthetic data}

We generate
$\operatorname{ARFIMA}(0,d_{\mathrm{true}},0)$--%
$\operatorname{GARCH}(1,1)$ series \citep{ling1997fractionally}, varying values of the
fractional-integration order $d_{\mathrm{true}}$ and the volatility persistence
$p_{\mathrm{garch}}$. These processes are covariance-stationary for
$d_{\mathrm{true}}<0.5$ and nonstationary for $d_{\mathrm{true}}\geq0.5$ \citep{hosking1981fractional}, and
$p_{\mathrm{garch}}$ independently controls volatility persistence. The design
tests two things: whether a method transforms persistent inputs sufficiently,
and whether it avoids unnecessary transformation below the stationarity
boundary. See Appendix~\ref{ap:synthetic_data_generation} for more details. Figure~\ref{fig:synthetic-adaptivity} summarizes both.

\begin{figure*}[t]
  \centering
  \includegraphics[width=0.98\textwidth]{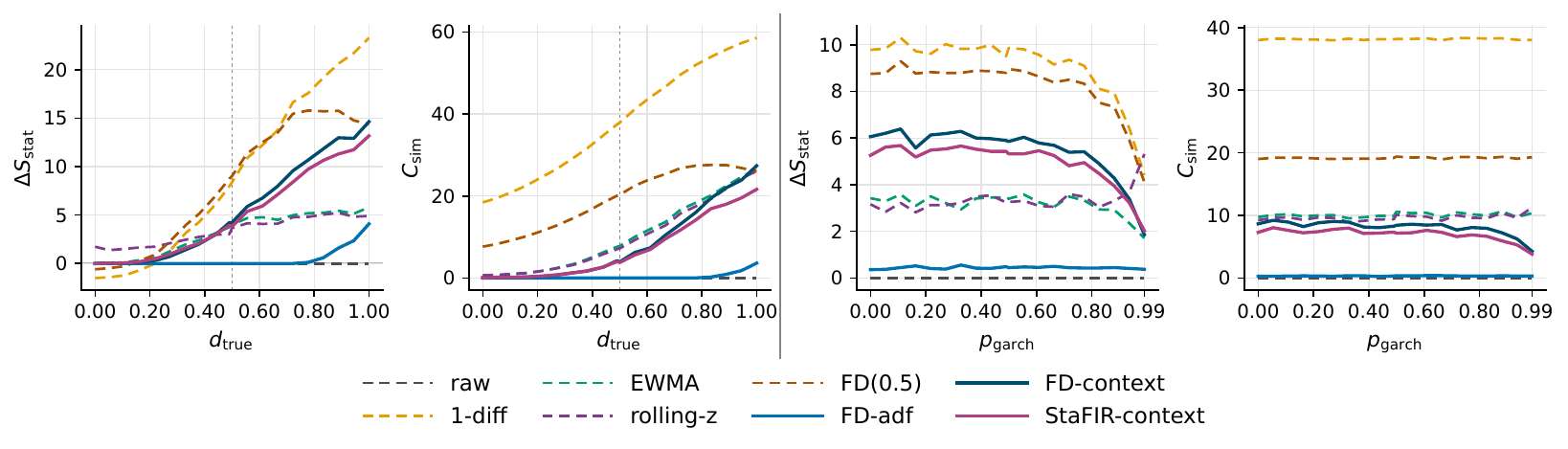}
  \caption{Held-out stationarity gain $\Delta S_{\mathrm{stat}}$ and similarity
  cost $C_{\mathrm{sim}}$ on the synthetic designs. Dashed lines indicate fixed
  methods and solid lines adaptive methods. The two panels left of the separator vary the
  fractional-integration order
  $d_{\mathrm{true}}$ on both sides of the stationarity boundary
  $d_{\mathrm{true}}=0.5$, marked by the fine vertical line. The two panels
  to the right vary GARCH persistence over
  $p_{\mathrm{garch}}\in[0,0.99]$. Lines average over 30 seeds.}
  \label{fig:synthetic-adaptivity}
\end{figure*}

Above the boundary, both contextual methods transform more strongly as
$d_{\mathrm{true}}$ increases, whereas stronger volatility persistence reduces
their stationarity gains. Averaged over the nonstationary grid,
\texttt{StaFIR-context} gains 7.86 stationarity points at a similarity cost of
11.80, compared with 8.70 and 14.00 for \texttt{FD-context}. First
differencing produces the largest stationarity gain but also the largest loss
of signal. Below the boundary, \texttt{StaFIR-context} and \texttt{FD-context}
stay close to the identity, whereas \texttt{FD(0.5)} keeps paying similarity
cost, in accordance with its lack of adaptivity. We test a simple regime shift scenario where $d_{\mathrm{true}}$ or 
$p_{\mathrm{garch}}$ changes at a scheduled point within a series: 
the two contextual methods adjust similarly in the same
direction. The full score table and an additional regime shift experiment
for \texttt{StaFIR-context} and \texttt{FD-context} are
reported in Appendices~\ref{ap:synthetic-additional-results}
and~\ref{ap:regime-shift-results} respectively.

\subsection{Stationarity--similarity trade-off on rolling log prices}

We apply the rolling-window protocol defined in Appendix~\ref{ap:rolling-protocol} to the daily log prices of the four
financial series. Figure~\ref{fig:real-levels-tradeoff} places every method in
the plane spanned by similarity cost and empirical stationarity gain and traces the path of fixed-order fractional differencing as $d$ varies for reference. 
Table~\ref{tab:real-tests-main} reports conventional held-out diagnostics on the same outputs: ADF, PP, and KPSS results, and the local Whittle estimate of the residual memory.

\begin{figure*}[t]
  \centering
  \includegraphics[width=0.95\textwidth]{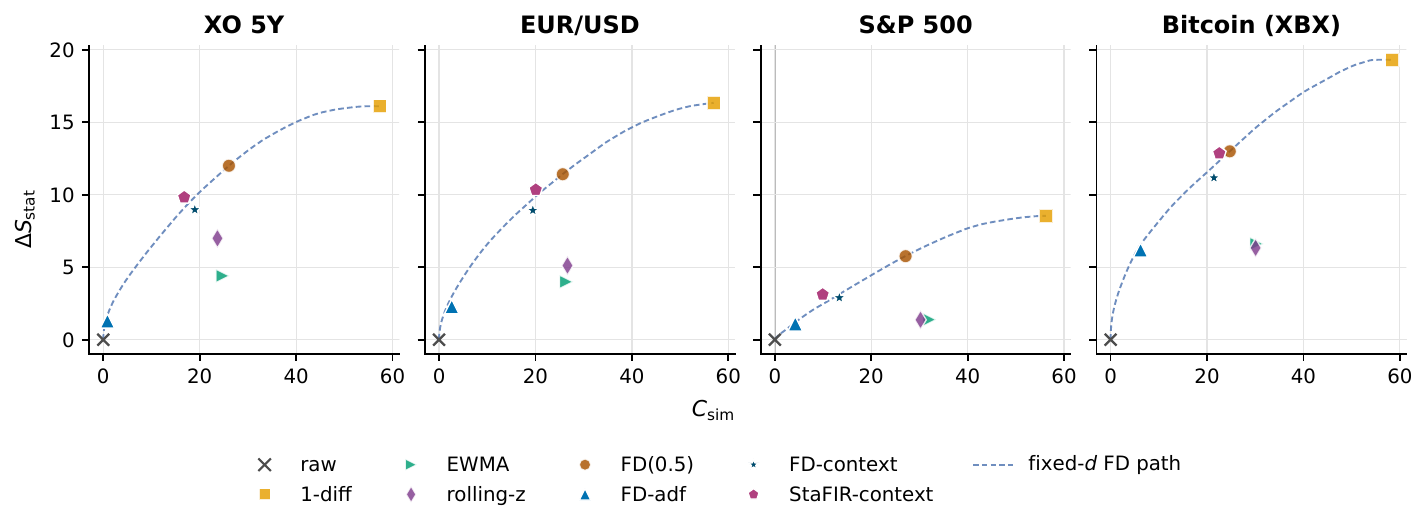}
  \caption{Mean held-out stationarity gain $\Delta S_{\mathrm{stat}}$ and
  similarity cost $C_{\mathrm{sim}}$ across rolling
  origins for the four log-price series, on shared axes. The dashed curve is
  the path traced by fixed-order fractional differencing as $d$ varies. Stationarity gain is the
  selection criterion of the contextual methods; Table~\ref{tab:real-tests-main}
  gives additional held-out diagnostics, and Appendix~\ref{ap:real-paired-main} gives
  paired bootstrap intervals.}
  \label{fig:real-levels-tradeoff}
\end{figure*}

\input{generated/tables/real_tests_main}

On the equal-weight four-series panel, \texttt{StaFIR-context} gains 1.05 more
stationarity points than \texttt{FD-context} (95\% interval $[0.39,1.71]$)
with no clear difference in similarity cost. It also rejects ADF and PP unit-root nulls more often and leaves less residual memory: the paired difference in
absolute local Whittle memory is $-0.08$ with interval $[-0.11,-0.05]$.
However, \texttt{StaFIR-context} also has the lowest KPSS pooled non-rejection rate.
The fixed \texttt{FD(0.5)} transformation
rejects the unit-root nulls at every origin but has lower measured input similarity. Across rolling origins, 
\texttt{StaFIR-context} exhibits material changes in its realized transformation more often than \texttt{FD-context}, 
but the median magnitude of these changes is smaller. Per-series diagnostics, paired
uncertainty, and turnover results are reported in
Appendix~\ref{ap:real-additional-results}. Appendix Table~\ref{tab:stationary-input-diagnostics} compares the mean $S_{\mathrm{stat}}$ and KPSS non-rejection percentages on raw synthetic inputs in the stationary parameter range to further study their differences. Appendix~\ref{ap:selection-sensitivity} examines sensitivity to the target fraction, shortfall penalty, and switching cost.

\subsection{Downstream result: realized-volatility forecasting}

The downstream experiment forecasts the log of a realized-variance target
computed from the trailing five daily returns of the same four markets. In the
main experiment, each FIR representation uses the same ten-lag ridge
autoregression, which predicts the next transformed value as a linear
function of its ten most recent values.
These forecasts are iterated recursively to
\(h\in\{1,5,20,40,80\}\), and the fitted filter is reintegrated over each
forecast path to obtain \(\widehat\ell_{t+h\mid t}\) on the unfiltered
log-realized-variance scale. This implies the positive forecast
\(\widehat V_{t+h\mid t}=\exp(\widehat\ell_{t+h\mid t})\). For realized variance \(V_{t+h}\), we evaluate forecasts using the
quasi-likelihood (QLIKE) loss \citep{patton2011volatility}. We report each method's percentage change in
mean QLIKE relative to \texttt{raw}, averaged equally across the four
series. Appendix~\ref{ap:downstream-task} gives the complete forecasting
and uncertainty protocol and QLIKE definition, while Appendix~\ref{ap:program_properties}
establishes the inversion conditions and describes the treatment of the
unit-root boundary.

\input{generated/tables/downstream_results}

\texttt{StaFIR-context} has lower mean QLIKE than the raw representation at all
five horizons, with improvements between 0.83\% and 3.28\%. Paired intervals
exclude zero through horizon 40 but not at horizon 80. It is the only adaptive
method whose point estimates improve on \texttt{raw} at all five horizons.
Direct comparisons with \texttt{FD(0.5)} include zero at every horizon, so we
find no clear forecasting difference between the two methods. \texttt{StaFIR-context}
nevertheless has higher measured input similarity, with similarity scores
of 93.5 and 79.4, respectively. The per-series results in
Appendix~\ref{ap:downstream-additional-results} show that the gains are
concentrated on EUR/USD and small elsewhere. The external \texttt{log-HAR}
reference has the strongest point estimates at horizons 5--40, but
\texttt{StaFIR-context} reverses that ranking at $h=1$ and $h=80$. Because ten consecutive outputs of the 101-coefficient FIR depend on
110 raw observations, we also compare against a raw AR(110) control
with the same available history and training protocol. Relative to
this control, \texttt{StaFIR-context} reduces panel QLIKE by 3.92\%
and 8.24\% at $h=1$ and $h=5$, respectively, with marginal 95\%
intervals excluding zero; intervals at longer horizons include zero
(Appendix~\ref{ap:downstream-additional-results},
Table~\ref{tab:downstream-direct-comparisons}).
This supports a benefit beyond access to additional history.

As a sensitivity check, we also fit a separate ten-lag ridge head for
each horizon to predict future raw log realized variance directly from
transformed lags, without recursion or reintegration. Under this direct
head, \texttt{StaFIR-context} outperforms \texttt{FD(0.5)} at
$h=1,5,20$ but trails \texttt{raw} at $h=1,5,20,40$
(Appendix~\ref{ap:direct-head-sensitivity},
Table~\ref{tab:downstream-direct-head-sensitivity}).
These results indicate that relative forecasting performance depends
on how the filtered representation is used.

\subsection{Deployment limits: inference and adaptation cost}

All adaptive transformations in the deployment benchmark share a 101-coefficient FIR inference path, with median latency of 4.0~$\mu$s and 1.59~KiB of persistent numeric state. Computational differences arise during adaptation: \texttt{StaFIR-context} fits and scores 98 candidates per selection origin, taking 1.56~s on one worker, compared with 0.12~s for \texttt{FD-context}.

Selecting $(\lambda_s,\lambda)$ every eighth origin lowers the amortized adaptation time to 0.37~s per origin, while filter weights are still re-fitted at every origin. Paired forecasting comparisons favor selection at every origin over several less frequent schedules at some horizons, but do not establish a monotonic benefit from increasing selection frequency. Appendix~\ref{ap:adaptation-cost} reports the full costs and selection-schedule comparisons.

%% file: generated/tables/real_tests_main.tex
\begin{table}[t]
  \caption{Conventional held-out stationarity diagnostics on the rolling
  log-price series. Entries are pooled rates (\%) of rejection of the ADF and PP
  unit-root nulls and non-rejection of the KPSS stationarity null.
  LW $\widehat d$ is the local Whittle estimate of the
  residual memory of the held-out output, with a 95\% stationary-bootstrap
  interval. The raw series pool at $\widehat d=0.96$ and first
  differences at $-0.04$. Estimation details are in
  Appendix~\ref{ap:real-additional-results}.}
  \label{tab:real-tests-main}
  \centering
  \scriptsize
  \setlength{\tabcolsep}{7pt}
  \begin{tabular}{@{}lcccc@{}}
    \toprule
    Method & \shortstack{ADF\\rejection} & \shortstack{PP\\rejection} & \shortstack{KPSS\\non-rejection} & LW $\widehat d$ \\
    \midrule
    \texttt{FD(0.5)} & 100 & 100 & 31 & 0.46 [0.44, 0.48] \\
    \texttt{FD-adf} & 38 & 43 & 11 & 0.82 [0.80, 0.85] \\
    \texttt{FD-context} & 71 & 72 & 26 & 0.61 [0.56, 0.65] \\
    \texttt{StaFIR-context} & 77 & 80 & 10 & 0.53 [0.49, 0.58] \\
    \bottomrule
  \end{tabular}
\end{table}

%% file: generated/tables/downstream_results.tex
\begin{table*}[t]
  \caption{Downstream QLIKE relative to \texttt{raw} (\%, equal-weight mean
  over four series $\pm$ one paired stationary-bootstrap standard error, with
  bootstrap details in Appendix~\ref{ap:downstream-task}). Lower is
  better. Bold marks the lowest point estimate at each horizon. \texttt{FD-adf} equals \texttt{raw} because it selects $d=0$ throughout. Similarity is measured on the realized-variance inputs.
  All preprocessing rows use the common forecasting head. log-HAR is an external task-specific reference with a different direct forecasting architecture.
  }
  \label{tab:downstream-results}
  \centering
  \scriptsize
  \setlength{\tabcolsep}{5pt}
  \begin{tabular}{@{}lrrrrrr@{}}
    \toprule
    Method & Similarity & $h=1$ & $h=5$ & $h=20$ & $h=40$ & $h=80$ \\
    \midrule
    \texttt{raw} $=$ \texttt{FD-adf} & 100.0 & $0.00$ & $0.00$ & $0.00$ & $0.00$ & $0.00$ \\
    \texttt{1-diff} & 53.9 & $6.13\pm0.89$ & $27.12\pm4.92$ & $42.84\pm10.75$ & $49.19\pm10.34$ & $87.01\pm18.43$ \\
    \texttt{EWMA} & 91.8 & $1.32\pm0.60$ & $8.54\pm2.78$ & $13.53\pm4.86$ & $21.71\pm7.01$ & $31.82\pm7.44$ \\
    \texttt{FD(0.5)} & 79.4 & $\mathbf{-0.86\pm0.55}$ & $0.77\pm1.93$ & $-1.83\pm2.90$ & $0.06\pm3.67$ & $0.94\pm2.85$ \\
    \texttt{FD-context} & 93.9 & $-0.34\pm0.34$ & $0.21\pm1.10$ & $-0.80\pm1.45$ & $0.34\pm1.79$ & $0.97\pm1.25$ \\
    \texttt{StaFIR-context} & 93.5 & $-0.83\pm0.30$ & $-2.25\pm0.81$ & $-3.28\pm1.09$ & $-3.07\pm1.38$ & $\mathbf{-1.84\pm1.33}$ \\
    \midrule
    \texttt{log-HAR} & -- & $2.85\pm0.79$ & $\mathbf{-2.92\pm1.58}$ & $\mathbf{-3.93\pm2.12}$ & $\mathbf{-3.72\pm2.19}$ & $-1.17\pm1.77$ \\
    \bottomrule
  \end{tabular}
\end{table*}

%% file: sections/conclusion.tex
\section{Conclusion}

We introduced StaFIR, a causal FIR filter that learns filtering strength and its allocation across time scales through convex optimization. Synthetic experiments show adaptation to persistence. On rolling financial series, \texttt{StaFIR-context} improves the empirical stationarity score relative to \texttt{FD-context}, with more frequent rejection of the ADF and PP unit-root nulls and lower local Whittle persistence estimates, but less frequent KPSS non-rejection. With recursive reintegration, \texttt{StaFIR-context} shows no clear forecasting difference from \texttt{FD(0.5)} while retaining higher measured input similarity. With a direct forecasting head, \texttt{StaFIR-context} outperforms \texttt{FD(0.5)} at shorter horizons but generally trails \texttt{raw} inputs. Future work should test whether higher measured input similarity translates into improved predictive performance on other tasks, including under tail-sensitive losses. Further studies could assess different filter lengths and smaller decay-scale dictionaries. Methodological extensions could explore penalties that capture both increases and decreases in local second moments, addressing the current penalty’s one-sided treatment of local variation.

%% file: sections/appendix.tex
\appendix

\section*{Appendix}

The appendix is organized as follows. Appendix~\ref{ap:survival_bridge}
connects GL fractional differencing to the exponential mixture through the
survival-curve representation and an exact beta-integral identity.
Appendix~\ref{ap:program_properties} establishes the properties of the
estimation program, including its second-order cone form and its
invertibility. Appendix~\ref{ap:scores_selection} defines the evaluation
scores, effect turnover, and the contextual selection rule.
Appendix~\ref{ap:experimental_details} records the compared methods, the
rolling protocol, the candidate grids and solver settings, the data, and the
forecasting task. Appendix~\ref{ap:additional-results}
reports additional results in the order of Section~\ref{sec:experiments},
followed by ablations, sensitivity analyses, and the adaptation cost and its
measurement.

\section{The Survival-Curve Bridge from GL to the Exponential Mixture}
\label{ap:derivations}
\label{ap:survival_bridge}

Every differencing-type filter of Section~\ref{sec:methods} is generated from
a nonincreasing survival curve of lag weight. Let
$1\geq a_0\geq\cdots\geq a_K=0$ and set $h_k=-(a_{k-1}-a_k)$ for
$k=1,\ldots,K$. Here $a_0$ is the total weight placed on past observations,
$a_k$ is the part of that weight assigned to lags beyond $k$, and $|h_k|$ is
the weight at lag $k$. Writing $a_k=a_0 s_k$, where $s_k=a_k/a_0$ is the
normalized survival shape with $s_0=1$ and $s_K=0$, separates each filter into
a total lag weight and a survival shape. This section expresses the GL kernel
and the proposed mixture in these shared coordinates and then connects them
directly through Euler's beta integral.

\paragraph{GL in survival coordinates.}
A recursive implementation of the GL weights in Eq.~\eqref{eq:fracdiff_gl} is
\begin{equation}
  h_k^{\mathrm{GL}}(d) =
  \begin{cases}
    1, & k = 0, \\
    -h_{k-1}^{\mathrm{GL}}(d)\dfrac{d-k+1}{k}, & k \ge 1.
  \end{cases}
  \label{eq:fracdiff_rec}
\end{equation}
For $0<d<1$, the GL coefficients $h_k^{\mathrm{GL}}(d)=(-1)^k\binom{d}{k}$,
$k\ge1$, are the negated increments of the survival sequence below, where
$\Gamma(\cdot)$ denotes Euler's gamma function:
\[
   s_k^{\mathrm{GL}}(d)
   =
   \frac{\Gamma(k+1-d)}{\Gamma(1-d)\Gamma(k+1)},
   \qquad
   h_k^{\mathrm{GL}}(d)
   =
   -\left(s_{k-1}^{\mathrm{GL}}(d)-s_k^{\mathrm{GL}}(d)\right)
   =
   -\frac{d\,\Gamma(k-d)}{\Gamma(1-d)\Gamma(k+1)}.
\]
The increment identity follows from two applications of the recurrence
$\Gamma(z+1)=z\Gamma(z)$, to $\Gamma(k+1)$ and to $\Gamma(k+1-d)$. As
$k\to\infty$,
\[
   s_k^{\mathrm{GL}}(d)\sim \frac{k^{-d}}{\Gamma(1-d)},
   \qquad
   \left|h_k^{\mathrm{GL}}(d)\right|\sim \frac{d\,k^{-d-1}}{\Gamma(1-d)}.
\]
Thus the GL survival sequence decays as $k^{-d}$, whereas the magnitudes of
its filter coefficients decay as $k^{-d-1}$. For a finite horizon $K$, let
$m_K=1-s_K^{\mathrm{GL}}(d)$ and define the normalized shape
$\widetilde s_k^{\mathrm{GL}}=(s_k^{\mathrm{GL}}-s_K^{\mathrm{GL}})/m_K$ for
$k=0,\ldots,K$. Then $\widetilde s_0^{\mathrm{GL}}=1$,
$\widetilde s_K^{\mathrm{GL}}=0$, and
$h_k^{\mathrm{GL}}=-m_K(\widetilde s_{k-1}^{\mathrm{GL}}-\widetilde s_k^{\mathrm{GL}})$.
This is the survival parameterization above, with total lag weight $a_0=m_K$
and normalized survival shape $\widetilde s^{\mathrm{GL}}$.

\paragraph{The mixture in survival coordinates.}
Fix the candidate decay scales $\tau_1,\ldots,\tau_J$ of
Section~\ref{sec:methods} with $\rho_j=\exp(-1/\tau_j)$, and let $p$ be a
nonnegative weight vector summing to one that allocates the survival shape
across those scales. The normalized finite-horizon survival shape of the
mixture family is
\begin{equation}
   s_k^{\mathrm{StaFIR}}(p) = \sum_{j=1}^{J} p_j \frac{\rho_j^k-\rho_j^K}{1-\rho_j^K}, \qquad k=0,\ldots,K.
  \label{eq:stafir-survival}
\end{equation}
The estimation program optimizes the unnormalized nonnegative weights $w$
directly, so the total lag weight and the survival shape are learned jointly.
When $\sum_{j=1}^{J} w_j>0$, the normalized survival shape is recovered as
$p=w/\sum_{j=1}^{J} w_j$, and the coefficients of Section~\ref{sec:methods}
satisfy
$h_k(w)=-\sum_{j=1}^{J} w_j\left(s_{k-1}^{\mathrm{StaFIR}}(p)-s_k^{\mathrm{StaFIR}}(p)\right)$
for $k\geq1$. When $w=0$, the transformation is the identity and $p$ is
immaterial. In survival coordinates, the mixture family therefore has total
lag weight $\sum_{j=1}^{J} w_j$ and a learned exponential-mixture survival shape.

\paragraph{Exact beta-mixture representation.}
The connection follows directly from the beta integral.

\begin{proposition}
For $0<d<1$, define the probability measure
\begin{equation*}
  \mu_d(\mathrm d\rho)
  =
  \frac{\rho^{-d}(1-\rho)^{d-1}}{\mathrm B(1-d,d)}\,\mathrm d\rho,
  \qquad 0<\rho<1,
\end{equation*}
where $\mathrm B(a,b)=\Gamma(a)\Gamma(b)/\Gamma(a+b)$ is Euler's beta
function. Then
\begin{equation}
  s_k^{\mathrm{GL}}(d)
  = \int_0^1 \rho^k\,\mu_d(\mathrm d\rho),
  \qquad k\geq0.
  \label{eq:gl-beta-survival}
\end{equation}
For a finite horizon $K$, let
\begin{equation}
  W_{d,K}(\mathrm d\rho)
  = (1-\rho^K)\mu_d(\mathrm d\rho).
  \label{eq:gl-finite-mixture-measure}
\end{equation}
This is a nonnegative finite measure with total mass $m_K$, and the truncated
GL coefficient magnitudes satisfy
\begin{equation}
  \left|h_k^{\mathrm{GL}}(d)\right|
  =
  \int_0^1
  \frac{(1-\rho)\rho^{k-1}}{1-\rho^K}
  \,W_{d,K}(\mathrm d\rho),
  \qquad k=1,\ldots,K.
  \label{eq:gl-beta-coefficients}
\end{equation}
Equivalently, with $\nu_{d,K}=W_{d,K}/m_K$, the normalized finite-horizon
survival shape satisfies
\begin{equation}
  \widetilde s_k^{\mathrm{GL}}(d)
  =
  \int_0^1
  \frac{\rho^k-\rho^K}{1-\rho^K}
  \,\nu_{d,K}(\mathrm d\rho),
  \qquad k=0,\ldots,K.
  \label{eq:gl-beta-finite-survival}
\end{equation}
\end{proposition}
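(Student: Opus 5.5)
The plan is to establish the moment identity \eqref{eq:gl-beta-survival} first and derive everything else from it by linearity of the integral.

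First, $\mu_d$ is a probability measure. For $0<d<1$ the density $\rho^{-d}(1-\rho)^{d-1}$ is nonnegative and integrable on $(0,1)$, since both exponents exceed $-1$. Its integral is $\mathrm B(1-d,d)$ by Euler's integral $\int_0^1\rho^{a-1}(1-\rho)^{b-1}\,\mathrm d\rho=\mathrm B(a,b)$ with $a=1-d$, $b=d$.

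For \eqref{eq:gl-beta-survival}, apply the same integral with $a=k+1-d$:
\[
\int_0^1\rho^k\,\mu_d(\mathrm d\rho)=\frac{\mathrm B(k+1-d,d)}{\mathrm B(1-d,d)}
=\frac{\Gamma(k+1-d)\Gamma(d)/\Gamma(k+1)}{\Gamma(1-d)\Gamma(d)/\Gamma(1)}
=\frac{\Gamma(k+1-d)}{\Gamma(1-d)\Gamma(k+1)}.
\]
This is exactly $s_k^{\mathrm{GL}}(d)$ as defined above the proposition.

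Next, consider $W_{d,K}$. Since $0\le 1-\rho^K\le 1$, it is a nonnegative finite measure. Its total mass is
\[
\int_0^1(1-\rho^K)\,\mu_d(\mathrm d\rho)=s_0^{\mathrm{GL}}-s_K^{\mathrm{GL}}=1-s_K^{\mathrm{GL}}=m_K,
\]
using \eqref{eq:gl-beta-survival} at $k=0$ and $k=K$.

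For \eqref{eq:gl-beta-coefficients}, the factor $1-\rho^K$ in the integrand cancels against the density of $W_{d,K}$ (this is legitimate for $\rho\in(0,1)$, where $1-\rho^K>0$). The right-hand side therefore becomes
\[
\int_0^1(\rho^{k-1}-\rho^k)\,\mu_d(\mathrm d\rho)=s_{k-1}^{\mathrm{GL}}-s_k^{\mathrm{GL}}.
\]
By the increment identity stated earlier, this equals $-h_k^{\mathrm{GL}}(d)=|h_k^{\mathrm{GL}}(d)|$. The sign is correct because the coefficients are negative for $k\ge1$ when $0<d<1$.

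Finally, for \eqref{eq:gl-beta-finite-survival}, substitute $\nu_{d,K}=W_{d,K}/m_K$ and again cancel $1-\rho^K$. The right-hand side is
\[
\frac{1}{m_K}\int_0^1(\rho^k-\rho^K)\,\mu_d(\mathrm d\rho)=\frac{s_k^{\mathrm{GL}}-s_K^{\mathrm{GL}}}{m_K}=\widetilde s_k^{\mathrm{GL}},
\]
which is the definition of the normalized shape. Alternatively, one can telescope \eqref{eq:gl-beta-coefficients} over lags $k+1,\ldots,K$.

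There is no real obstacle here. The only points needing care are integrability near the endpoints, which is needed for the moments to be finite, and the positivity of $1-\rho^K$ on the open interval, which justifies the cancellation. Both are immediate for $0<d<1$.
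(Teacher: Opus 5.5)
Your proof is correct and follows the paper's argument. Euler's beta integral gives the moment identity, and cancelling $1-\rho^K$ against the density of $W_{d,K}$ reduces the remaining claims to differences of the moments $s_k^{\mathrm{GL}}(d)$; the only cosmetic difference is that the paper obtains the normalized survival shape by telescoping the coefficient identity over lags beyond $k$, the alternative you mention, rather than by directly evaluating $\int_0^1(\rho^k-\rho^K)\,\mu_d(\mathrm d\rho)$.
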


\begin{proof}
Euler's beta integral gives
\[
  \int_0^1 \rho^k\,\mu_d(\mathrm d\rho)
  =
  \frac{\mathrm B(k+1-d,d)}{\mathrm B(1-d,d)}
  =
  \frac{\Gamma(k+1-d)}{\Gamma(1-d)\Gamma(k+1)}
  = s_k^{\mathrm{GL}}(d),
\]
which proves Eq.~\eqref{eq:gl-beta-survival}. Taking successive differences
therefore yields
\[
  \left|h_k^{\mathrm{GL}}(d)\right|
  = s_{k-1}^{\mathrm{GL}}(d)-s_k^{\mathrm{GL}}(d)
  = \int_0^1 (1-\rho)\rho^{k-1}\,\mu_d(\mathrm d\rho).
\]
Moreover,
\[
  \int_0^1 W_{d,K}(\mathrm d\rho)
  = 1-s_K^{\mathrm{GL}}(d)
  = m_K < 1.
\]
Substitution of Eq.~\eqref{eq:gl-finite-mixture-measure} proves
Eq.~\eqref{eq:gl-beta-coefficients}. Summing the coefficient representation
over lags beyond $k$ and dividing by $m_K$ gives
Eq.~\eqref{eq:gl-beta-finite-survival}.
\end{proof}

Because $\rho^k=e^{-rk}$ for $r=-\log\rho>0$,
Eq.~\eqref{eq:gl-beta-survival} is an exact nonnegative exponential-mixture
representation of the GL survival sequence. Equation~\eqref{eq:gl-beta-coefficients}
uses exactly the normalized finite-horizon atom of Section~\ref{sec:methods}.
Thus the truncated GL kernel belongs to the continuous-scale version of the
mixture family, with total lag weight $m_K$ satisfying the constraint of
Eq.~\eqref{eq:mass-cap}. The endpoint cases recover the identity transformation
at $d=0$ and first differencing at $d=1$ in the corresponding limiting sense.

\paragraph{A numerical check of the fixed-dictionary discretization.}
The exact representation above permits any $\rho\in(0,1)$, whereas the
deployed configuration restricts the mixing measure to the nine decay scales of
Appendix~\ref{ap:implementation_details}. We project the truncated GL
coefficient magnitudes $(|h_k^{\mathrm{GL}}(d)|)_{k=1}^{K}$ onto the nine
exponential profiles by nonnegative least squares over the feasible set
$\{w\geq0:\sum_{j=1}^{J} w_j\leq1\}$. The relative $\ell_2$ reconstruction error
grows from $0.10\%$ at $d=0.1$ to $1.4\%$ at $d=0.6$, and reaches $9.0\%$ at
$d=0.9$; at this order the truncated GL weight approaches one and the
restricted-grid optimum reaches the constraint $\sum_{j=1}^{J} w_j\leq1$.
Away from that constraint, the fitted total lag
weight differs from the truncated GL weight $m_K$ by less than
$5\times10^{-4}$. The chosen dictionary therefore provides a low-dimensional
approximation to the exact continuous mixture, with accuracy deteriorating near
$d=1$. The estimation program learns its weights from the data by
Eq.~\eqref{eq:socp-objective}.

\section{Properties of the Estimation Program}
\label{ap:program_properties}

This section records the properties of the program in
Eqs.~\eqref{eq:socp-objective}--\eqref{eq:mass-cap} that the paper relies on:
the zero-frequency gain, the increment-variance identity, the second-order
cone reformulation, and the conditions for stable inversion and finite-horizon
reintegration.

\paragraph{Total lag weight and zero-frequency gain.}
The total lag weight controls how strongly the filter attenuates a constant
input. Since $h_0=1$ and the lag coefficients sum to
$-\sum_{j=1}^{J}w_j$, the zero-frequency gain is
\[
  H(1)=\sum_{k=0}^{K}h_k(w)=1-\sum_{j=1}^{J}w_j.
\]
The constraints $w\geq0$ and $\sum_{j=1}^{J}w_j\leq1$ therefore keep this
gain between zero and one. A total lag weight of zero gives the identity
filter; a total lag weight of one removes constant components. The
optimization learns both the total lag weight and its distribution across
the candidate decay scales.

\paragraph{Increment-variance identity.}
For a weakly stationary output process $\{Y_t\}$ with
autocovariance function $\gamma_Y(\ell)$,
\[
  \mathbb E[\widehat g_{i,\ell}(w)]
  =\mathbb E[(Y_{t+\ell}-Y_t)^2]
  =2\{\gamma_Y(0)-\gamma_Y(\ell)\},
\]
which is the lag-$\ell$ increment variance, or twice the semivariogram.

\paragraph{SOCP reformulation and convexity.}
\label{ap:socp_convexity}
The implementation expresses the maxima in
Eq.~\eqref{eq:second-order-regularizer} using envelope variables
$\kappa_\ell$ and nonnegative slacks $\eta_{i,\ell}$. This is the form the
program optimizes and the basis of the convexity argument below. Let
$q_{i,0}=\widehat v_i$ and $q_{i,\ell}=\widehat g_{i,\ell}$ for
$\ell\in\mathcal L$, and let $\nu_0=1$ and $\nu_\ell=1/|\mathcal L|$. The
program adds
\begin{equation}
  \lambda\sum_{\ell\in\mathcal L\cup\{0\}}\nu_\ell
  \left(\kappa_\ell+\sum_{i=1}^{I}\eta_{i,\ell}\right)
  \label{eq:second-order-epigraph-objective}
\end{equation}
to the objective, subject to
\begin{equation}
  q_{i,\ell}(w)\leq\kappa_\ell+\eta_{i,\ell},
  \qquad
  \kappa_\ell\geq0,\quad\eta_{i,\ell}\geq0.
  \label{eq:socp-variogram-constraint}
\end{equation}
For any fixed nonnegative local energies,
\[
  \min_{\kappa_\ell,\eta_{1,\ell},\ldots,\eta_{I,\ell}\geq0}
  \left\{\kappa_\ell+\sum_i\eta_{i,\ell}:
  q_{i,\ell}\leq\kappa_\ell+\eta_{i,\ell}\right\}
  =\max_i q_{i,\ell}.
\]
Thus the envelope--slack form is exactly the worst-window regularizer of the
main text. Its strength relative to the fidelity term is controlled by
$\lambda$. The reformulated program is
always feasible: at any $w$ in the feasible set, the nonnegative slacks can be
raised until every constraint in Eq.~\eqref{eq:socp-variogram-constraint}
holds. Its objective is a sum of nonnegative terms, so it is bounded below.
The following proposition justifies the claim in Section~\ref{sec:methods}
that each calibration solves a single SOCP.

\begin{proposition}
The optimization problem in Eqs.~\eqref{eq:socp-objective}--\eqref{eq:mass-cap}
is second-order cone representable.
\end{proposition}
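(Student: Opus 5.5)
The plan is to exploit the fact that $h(w)$ is affine in $w$, so every filtered output $y_t(w)=\sum_{k=0}^K h_k(w)x_{t-k}=x_t+\sum_{j}w_j z_{j,t}$ is affine in $w$. Here $z_{j,t}=-\sum_{k=1}^K \frac{(1-\rho_j)\rho_j^{k-1}}{1-\rho_j^K}x_{t-k}$ is a fixed data-dependent regressor. Writing $y(w)=x+Zw$ with $Z\in\mathbb R^{n\times J}$, every quantity in the objective becomes a function of an affine map of $w$. The goal is then to show that each term has an SOC-representable epigraph, and that these combine by nonnegative weighting and maxima. The constraints $w\geq0$ and $\sum_j w_j\le 1$ are linear, hence SOC representable.

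The key steps, in order, are as follows.

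First, the fidelity term is $\frac1n\|Zw\|_2^2$. I will introduce an epigraph variable $t_0$ with $\|Zw\|_2^2\le t_0$ and use the standard rotated-cone encoding $\|(2Zw,\,t_0-1)\|_2\le t_0+1$.

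Second, the drift term. The partial sums $S_t(w)=\sum_{u\le t}y_u(w)$ are affine in $w$, so $\mathcal R_{\mathrm{slow}}=\frac1{n^2}\|Aw+b\|_2^2$ for a lower-triangular summation matrix applied to $(x,Z)$. It is encoded the same way with a variable $t_1$.

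Third, the local terms. The centered window vector $y_i-\bar y_i\mathbf 1=P(y_i)$ uses the centering projection $P=I-\frac1m\mathbf 1\mathbf 1^\top$, so $\widehat v_i(w)=\frac1m\|P y_i(w)\|_2^2$ is a squared norm of an affine function of $w$. Similarly, $\widehat g_{i,\ell}(w)=\frac1{m-\ell}\|D_\ell y_i(w)\|_2^2$ with $D_\ell$ the lag-$\ell$ difference matrix. Each constraint $q_{i,\ell}(w)\le\kappa_\ell+\eta_{i,\ell}$ from Eq.~\eqref{eq:socp-variogram-constraint} is therefore a convex quadratic inequality with an affine right-hand side. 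It is representable as a rotated cone $\|(2c\,M y_i(w),\,\kappa_\ell+\eta_{i,\ell}-1)\|_2\le \kappa_\ell+\eta_{i,\ell}+1$, with the appropriate scaling $c$.

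By the envelope--slack identity displayed just before the proposition, minimizing Eq.~\eqref{eq:second-order-epigraph-objective} over the auxiliary variables recovers $\lambda\mathcal R_{\mathrm{loc}}(w)$ exactly. Alternatively, one can use a single $\kappa_\ell$ bounding all $q_{i,\ell}$, since the epigraph of a maximum is the intersection of epigraphs.

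Finally, I will assemble the pieces. Minimize $\frac1n t_0+\lambda_s t_1+\lambda\sum_\ell\nu_\ell(\kappa_\ell+\sum_i\eta_{i,\ell})$, which is linear with nonnegative coefficients, over the intersection of the cones and the linear constraints. Because the coefficients are nonnegative, each epigraph inequality is tight at the optimum, or at least the optimal value is unchanged. So the lifted problem has the same optimal value and $w$-minimizers as Eqs.~\eqref{eq:socp-objective}--\eqref{eq:mass-cap}. The case $\lambda_s=0$ or $\lambda=0$ is harmless: the corresponding variables simply drop out.

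The main obstacle is only the bookkeeping for the local penalty: checking that $\widehat v_i$ is genuinely a convex quadratic (a PSD form, via the idempotent $P$), and that the worst-window maximum is encoded from the correct side, i.e.\ upper-bounded, which the one-sided design guarantees. Once the affinity of $y(w)$ in $w$ is stated, everything else follows from the standard fact that $\{(u,s):\|u\|_2^2\le s\}$ is a rotated second-order cone.
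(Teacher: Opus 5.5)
Your proposal is correct and follows essentially the same route as the paper's proof. Both arguments use the affinity of $y(w)$ in $w$, rotated-cone epigraphs for the fidelity and partial-sum quadratics, rotated-cone constraints $\|r_{i,\ell}(w)\|_2^2\le c_\ell(\kappa_\ell+\eta_{i,\ell})$ for the windowed energies, and the envelope--slack identity for the worst-window maxima, with all remaining constraints linear. Your explicit matrices ($Z$, $P$, $D_\ell$), the concrete cone encoding, and your remark that nonnegative objective weights preserve optimal values and $w$-minimizers only make the paper's bookkeeping more explicit.
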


\begin{proof}
The filter coefficients $h_k(w)$ are affine in $w$. For a fixed input
segment, each filtered output $\sum_{k=0}^{K}h_k(w)x_{t-k}$ is therefore
affine in $w$. The fidelity term is a sum of squared affine residuals, and
each partial sum $\sum_{u=1}^{t}y_u(w)$ in
Eq.~\eqref{eq:kpss-regularizer} is also affine in $w$. Thus both the fidelity
term and the slow-drift penalty are convex quadratics and admit standard
second-order-cone epigraph representations. The remaining objective terms in
Eq.~\eqref{eq:second-order-epigraph-objective} are linear in $(\kappa,\eta)$,
while the constraint $\sum_{j=1}^{J} w_j\leq1$ in Eq.~\eqref{eq:mass-cap} and
the nonnegativity constraints $w\geq0$, $\kappa\geq0$, and $\eta\geq0$ are
linear.

For each window $i$, the mean
$\bar y_i(w)=m^{-1}\sum_{t=1}^{m}y_{i,t}(w)$ is affine in $w$. Define
\[
  r_{i,\ell,t}(w)=
  \begin{cases}
    y_{i,t}(w)-\bar y_i(w), & \ell=0, \\
    y_{i,t+\ell}(w)-y_{i,t}(w), & \ell\in\mathcal L,
  \end{cases}
\]
for $t=1,\ldots,c_\ell$, where $c_0=m$ and $c_\ell=m-\ell$ for
$\ell\in\mathcal L$. Each residual is affine in $w$, so the local energies
\[
  q_{i,\ell}(w)
  =\frac{1}{c_\ell}\sum_{t=1}^{c_\ell}r_{i,\ell,t}(w)^2
\]
are convex quadratics. For the cone representation, collect the residuals
into $r_{i,\ell}(w)=(r_{i,\ell,1}(w),\ldots,r_{i,\ell,c_\ell}(w))$.
The constraints of Eq.~\eqref{eq:socp-variogram-constraint} are equivalently
\[
\|r_{i,\ell}(w)\|_2^2
\le
c_\ell(\kappa_\ell+\eta_{i,\ell}).
\]
The residual vector is affine in $w$, and the right-hand side is affine and
nonnegative. These constraints are therefore second-order cone representable
by a rotated cone \citep{lobo1998socp}. Consequently, every
constraint is either linear or second-order cone representable, and the
optimization problem is an SOCP after the standard epigraph reformulation of
the quadratic objective.
\end{proof}

\paragraph{Minimum phase and finite-horizon inversion.}
The downstream forecasting task maps predictions from the filtered
representation back to the original scale, so the location of the filter
zeros determines the stability of this inversion. Because \(h_0=1\), \(h_k\leq0\), and
\[
  \sum_{k=1}^{K}|h_k|
  =\sum_{j=1}^{J}w_j
  \leq1,
\]
the transfer function \(H(z)=\sum_{k=0}^{K}h_kz^{-k}\) satisfies
\[
  |H(z)|
  \geq1-\sum_{k=1}^{K}|h_k||z|^{-k}
  \geq1-\sum_{j=1}^{J}w_j,
  \qquad |z|\geq1.
\]
If \(\sum_jw_j<1\), this bound is strictly positive, so all zeros lie
strictly inside the unit circle and the filter has a stable causal inverse.

When \(H(1)=1-\sum_jw_j>0\), the fixed kernel retains a unit root in
the input. For a random walk \(x_t=x_{t-1}+\varepsilon_t\) with
stationary innovations, its output equals \(H(1)x_t\) plus a finite linear
combination of the innovations, and therefore retains a random-walk component.
The fixed-width \texttt{FD(0.5)} implementation also has a positive kernel
sum and retains a random-walk component. In both cases, the output remains
nonstationary despite any reduction in finite-window persistence.

If \(\sum_jw_j=1\), then \(H(1)=0\). No zero can lie outside the unit
circle because
\[
  \sum_{k=1}^{K}|h_k||z|^{-k}<1,
  \qquad |z|>1.
\]
Moreover, the exponential dictionary gives \(|h_k|>0\) at every lag.
Equality on the unit circle therefore requires \(z^{-k}=1\) for every
\(k\), and hence \(z=1\). Finally,
\[
  H'(1)=\sum_{k=1}^{K}k|h_k|>0,
\]
so the zero at \(z=1\) has multiplicity one. The boundary filter is not
minimum phase, and its inverse has the same unit-root instability as the
inverse of first differencing.

The forecasting implementation permits this boundary case for
finite-horizon reintegration initialized from the observed history. A
fitted kernel is rejected when its largest zero modulus exceeds
\(1+10^{-4}\). Truncated fractional differencing with \(d<1\) is strictly
minimum phase, whereas first differencing and the EWMA residual lie on the
unit-root boundary.

The boundary was rare among the fitted StaFIR filters. Using the
solver-tolerance mass-cap diagnostic, none of the 469 post-selection
real-data fits reached it; the maximum total lag mass was \(0.993\) for
log prices and \(0.981\) for realized volatility. The cap was active for
139 of 25,920 post-selection synthetic fits (\(0.54\%\)), spread across
multiple values of the true memory parameter.

\section{Scores and Selection}
\label{ap:scores_selection}

\subsection{Evaluation scores and effect turnover}
\label{ap:score_definitions}

The stationarity score of Section~\ref{sec:scores} is computed on the
transformed evaluation segment $y$, whereas the similarity score compares $y$
with the aligned raw segment $x$. The same definitions score the validation
outputs during selection.

\paragraph{Stationarity score.}
Let $\{\widehat\mu_t\}$ and $\{\widehat v_t\}$ be the means and sample
variances over sliding windows of length $m=128$, and for each
$\ell\in\mathcal L=\{1,2,4,8,16,32\}$ let $\{\widehat g_{t,\ell}\}$ be the
corresponding sliding-window variograms, the mean squared lag-$\ell$
increments within each window. Let $\sigma_{\mathrm{samp}}(\cdot)$ denote the
sample standard deviation over the values indexed by $t$. Define the defects
\[
D_\mu=\frac{\sigma_{\mathrm{samp}}(\widehat\mu_t)}{\sigma_{\mathrm{samp}}(y_t)},
\qquad
D_v=\sigma_{\mathrm{samp}}\!\left\{\log\widehat v_t\right\},
\qquad
D_g=\max_{\ell\in\mathcal L}
\frac{\sigma_{\mathrm{samp}}(\widehat g_{t,\ell})}{\widehat g_{\ell}},
\]
where $\widehat g_{\ell}=(n-\ell)^{-1}\sum_{t=1}^{n-\ell}(y_{t+\ell}-y_t)^2$
is the full-segment variogram. Degenerate denominators are floored at machine
precision; the full guard rules are in the code. Each defect maps
to a bounded component score $100/(1+D)$, and the aggregate is the worst
component,
\[
\Sstat(y)
=\min\left\{\frac{100}{1+D_\mu},\frac{100}{1+D_v},\frac{100}{1+D_g}\right\}.
\]
The three components are reported separately as the mean, log-variance, and
variogram stability in Table~\ref{tab:real-log-price-diagnostics}.

\paragraph{Similarity score.}
Let $\rho_{\mathrm P}(x,y)$ be the full-sample Pearson correlation,
$\rho_{\mathrm P,\mathcal T}(x,y)$ the Pearson correlation restricted to the
tail set
\[
  \mathcal T=\{t:x_t\le Q_{0.10}(x)\}\cup\{t:x_t\ge Q_{0.90}(x)\},
\]
with quantiles computed on the raw segment, and let
$\operatorname{dCor}(x,y)\in[0,1]$ be the distance correlation. Define
\[
a_1(x,y)=\frac{\rho_{\mathrm P}(x,y)+1}{2},
\qquad
a_2(x,y)=\frac{\rho_{\mathrm P,\mathcal T}(x,y)+1}{2},
\qquad
a_3(x,y)=\operatorname{dCor}(x,y),
\]
and let $\mathcal I(x,y)=\{r\in\{1,2,3\}:a_r(x,y)\text{ is defined}\}$ index
the available components. The similarity score averages the available
components,
\[
S_{\mathrm{sim}}(x,y)
=\frac{100}{|\mathcal I(x,y)|}
\sum_{r\in\mathcal I(x,y)}a_r(x,y).
\]
A Pearson component is omitted when it is undefined because either input has
zero empirical variance. Distance correlation is set to zero when either
distance variance vanishes. Before either score is evaluated, non-finite
observations are removed pairwise. Evaluation fails when fewer than 20 finite
aligned observations remain.

\paragraph{Effect turnover.}
In the rolling experiments we track how much the realized transformation
changes between consecutive origins. The \emph{effect turnover} at origin $j$
applies the current and the preceding fitted filters to the same evaluation
segment,
\begin{equation}
T_y(j) = \frac{\|F_j x_{\mathcal E_j}-F_{j-1}x_{\mathcal E_j}\|_2}{\|F_{j-1}x_{\mathcal E_j}\|_2+\varepsilon_{\mathrm{den}}},
\label{eq:effect-turnover}
\end{equation}
where $F_j$ is the filtering operator fitted at origin $j$, $\mathcal E_j$ is
the evaluation segment at origin $j$, and $\varepsilon_{\mathrm{den}}=10^{-12}$
stabilizes the denominator. Turnover is reported for the adaptive methods:
the two adaptive fractional-differencing rules and \texttt{StaFIR-context}.
Fixed transformations, including \texttt{FD(0.5)}, attain $T_y=0$ by
construction and are marked as adaptation-free.

\subsection{Contextual selection rule}
\label{ap:empirical-rule}

Let $x^{\mathrm{val}}$ be the standardized raw validation segment and
$\Delta x^{\mathrm{val}}$ its first difference, with the final training
observation supplying the history of the first difference. The raw and
first-differenced validation outputs define a contextual reference level for the empirical stationarity score
\[
S_{\min}
=
\Sstat(x^{\mathrm{val}})
+
\zeta
\left[
\Sstat(\Delta x^{\mathrm{val}})
-
\Sstat(x^{\mathrm{val}})
\right]_+,
\qquad [u]_+=\max(u,0),
\]
with the target fraction fixed at $\zeta=0.9$ in the main experiments. Both
anchors are re-estimated on every window. The endpoints are interpretable:
$\zeta=0$ sets the target at the raw-series score, whereas $\zeta=1$ matches
the score attained by first differencing whenever that score is higher.

Because the target and all candidate scores are estimated from a short
validation segment, selection treats $S_{\min}$ as a soft target.
Let $\mathcal C$ denote the candidate set of the method
and let $c^\star_{j-1}$ be the candidate deployed at the preceding rolling
origin. At origin $j$, the selected candidate satisfies
\[
c^\star_j
\in
\arg\max_{c\in\mathcal C}\;
S_{\mathrm{sim}}(x^{\mathrm{val}},y_c^{\mathrm{val}})
-
\gamma_{\mathrm{sel}}
\left[S_{\min}-\Sstat(y_c^{\mathrm{val}})\right]_+^2
-
c_{\mathrm{sw}}\,\mathbf 1\!\left[c\ne c^\star_{j-1}\right].
\]
The experiments use the shortfall curvature $\gamma_{\mathrm{sel}}=0.4$ and
the switching cost $c_{\mathrm{sw}}=10$. The second term penalizes only a
stationarity shortfall and grows quadratically below the target. The third
term discourages changes from the incumbent selection. It is omitted at the
first rolling origin and in the fixed synthetic grid, where selection occurs
once per scenario. The scheduled-change study rolls selection forward and
applies the switching cost at every later origin. The criterion is defined for
every evaluable candidate. Whether the selected candidate reaches $S_{\min}$,
and its shortfall from the target, are
recorded as diagnostics. The same rule is applied to the
$(\lambda_s,\lambda)$ grid of \texttt{StaFIR-context} and to the
fractional-order grid of \texttt{FD-context}. The sensitivity of the rolling
log-price results to $\zeta$, $\gamma_{\mathrm{sel}}$, and $c_{\mathrm{sw}}$
is reported in Appendix~\ref{ap:selection-sensitivity}.

\section{Experimental Details}
\label{ap:experimental_details}

\subsection{Compared transformations}
\label{ap:baseline_details}

Table~\ref{tab:methods-overview} summarizes the compared transformations.
\texttt{StaFIR-context} denotes the proposed filter of
Eqs.~\eqref{eq:socp-objective}--\eqref{eq:mass-cap} selected by the
contextual rule of Appendix~\ref{ap:empirical-rule}. All FIR
transformations, that is, \texttt{EWMA}, the fractional-differencing rules,
and \texttt{StaFIR-context}, use the maximum lag $K=100$, so every filter
combines the current observation with its 100 predecessors. The
\texttt{EWMA} baseline is the residual of a fixed exponentially weighted
moving average,
\[
  y_t=x_t-\sum_{k=0}^{K}q_kx_{t-k},\qquad
  q_k=\frac{a(1-a)^k}{\sum_{r=0}^{K}a(1-a)^r},\qquad
  a=\frac{2}{K+2}.
\]
The smoothing factor $a=2/(K+2)$ is the standard span convention for a span of
$K+1$ observations, matching the filter length. The \texttt{rolling-z}
baseline centers and scales each observation by the mean and standard
deviation of its trailing $K+1$ observations, including the current one.

\begin{table}[t]
  \caption{Compared preprocessing methods.}
  \label{tab:methods-overview}
  \centering
  \scriptsize
  \setlength{\tabcolsep}{4pt}
  \begin{tabular}{@{}lcll@{}}
    \toprule
    Method & Adaptive & Parameter & Selection rule \\
    \midrule
    \texttt{raw}, \texttt{1-diff} & no & -- & fixed references \\
    \texttt{EWMA}, \texttt{rolling-z} & no & window $K+1=101$ & fixed \\
    \texttt{FD(0.5)} & no & order $d=0.5$ & fixed a priori \\
    \texttt{FD-adf} & yes & order $d$ & smallest order with ADF rejection \\
    \texttt{FD-context} & yes & order $d$ & contextual rule of Appendix~\ref{ap:empirical-rule} \\
    \texttt{StaFIR-context} & yes & $(\lambda_s,\lambda)$ & contextual rule of Appendix~\ref{ap:empirical-rule} \\
    \bottomrule
  \end{tabular}
\end{table}

\subsection{Rolling evaluation protocol}
\label{ap:rolling-protocol}

Each fit-and-evaluate step uses a 2{,}000-observation window split into
1{,}280 training, 320 validation, and 400 evaluation observations
(Figure~\ref{fig:rolling-design}). Hyperparameters governed by the contextual
rule are selected on the validation segment, while \texttt{FD-adf} selects its
order on the calibration data. The \texttt{StaFIR-context} weights are then
re-fitted on the complete 1{,}600-observation calibration period, and metrics
are computed only on the evaluation segment. The fixed synthetic grid applies
this split once per scenario. The scheduled-change paths roll it forward by
200 observations and the market series by 63 observations. In every rolling
experiment, all methods at an origin receive the same standardized calibration
data and are evaluated on the same held-out segment.

\begin{figure}[h]
  \centering
  \begin{tikzpicture}[x=0.37cm, y=0.90cm,
      seg/.style={draw=black!70, semithick},
      lab/.style={font=\scriptsize},
      rowlab/.style={font=\scriptsize\itshape, anchor=east}]
    \node[rowlab] at (-0.5,0.4) {window $0$};
    \draw[seg, fill=black!8]  (0,0)    rectangle (12.8,0.8);
    \draw[seg, fill=black!22] (12.8,0) rectangle (16,0.8);
    \draw[seg, fill=black!45] (16,0)   rectangle (20,0.8);
    \node[lab] at (6.4,0.4)  {training (1{,}280)};
    \node[lab] at (14.4,0.4) {val.\ (320)};
    \node[lab, text=white] at (18,0.4) {eval.\ (400)};
    \node[rowlab] at (-0.5,-0.8) {window $1$};
    \draw[seg, fill=black!8]  (0.63,-1.2)  rectangle (16.63,-0.4);
    \draw[seg, fill=black!45] (16.63,-1.2) rectangle (20.63,-0.4);
    \node[lab, align=center] at (8.63,-0.8) {calibration (1{,}600)\\re-select $(\lambda_s,\lambda)$ and $d$; re-fit $w$};
    \node[lab, text=white] at (18.63,-0.8) {eval.\ (400)};
    \node[rowlab] at (-0.5,-2.0) {window $2$};
    \draw[seg, fill=black!8]  (1.26,-2.4)  rectangle (17.26,-1.6);
    \draw[seg, fill=black!45] (17.26,-2.4) rectangle (21.26,-1.6);
    \node[lab, align=center] at (9.26,-2.0) {calibration (1{,}600)\\re-select $(\lambda_s,\lambda)$ and $d$; re-fit $w$};
    \node[lab, text=white] at (19.26,-2.0) {eval.\ (400)};
    \node[lab] at (10.63,-2.8) {$\vdots$};
    \draw[<->, thin] (0,1.15) -- (0.63,1.15) node[midway, above, lab] {step 63};
    \draw[->, semithick] (-0.2,-3.5) -- (22.5,-3.5) node[right, lab] {time};
  \end{tikzpicture}
  \caption{Rolling evaluation design for the market series. Each
  2{,}000-observation window consists of training (1{,}280), validation (320),
  and evaluation (400) segments, with a step of 63 observations between
  recalibrations. Every later window re-selects the penalty pair and the
  adaptive fractional order and re-fits the weight vector $w$ on its own
  calibration segment. The fixed \texttt{FD(0.5)} baseline is unchanged.}
  \label{fig:rolling-design}
\end{figure}

\paragraph{Standardization.}
At each origin, the mean and scale are re-estimated from that origin's training
segment and the resulting affine transformation is shared by every method.

\paragraph{Selection period.}
Let $M$ denote the number of rolling origins between two hyperparameter
selections. The main experiments use $M=1$, so the contextual rule selects at
every origin. For $M>1$, selection runs at every $M$th origin. At the origins
in between, the method keeps the previously selected hyperparameters and only
re-fits the standardization and the filter weights on the new calibration
window. We call this a \emph{carried re-fit}. The endpoint $M=\infty$ keeps
the selection made at the first origin for the whole sample. The selection
period is varied in Appendix~\ref{ap:adaptation-cost}.

\paragraph{Fixed-order reference path.}
At every origin of the market series, every fractional order
$d\in\{0,0.01,\ldots,1\}$ is additionally evaluated on the held-out segment.
These evaluations trace the fixed-order fractional-differencing path shown in
Figure~\ref{fig:real-levels-tradeoff}.

\subsection{Candidate grids and solver}
\label{ap:implementation_details}

\paragraph{StaFIR candidates.}
The selected pair consists of the low-frequency penalty $\lambda_s$ and the
local second-order penalty $\lambda$ of Eq.~\eqref{eq:socp-objective},
\[
\lambda_s\in\{0\}\cup\{10^{-4+q/2}:q=0,\ldots,12\},
\qquad
\lambda\in\{0.1,0.2,0.5,1,2,5,10\},
\]
giving $14\times7=98$ candidates. These pairs form the candidate set
$\mathcal C$ of the contextual rule in Appendix~\ref{ap:empirical-rule}. Each
candidate is estimated on the training segment, and its validation output is
scored by that rule. The fixed decay scales are
$\tau_j\in\{0.5,1,2,4,8,16,32,64,128\}$, so $J=9$. Every candidate program
receives all nine scales, and the nonnegative weights decide which of them are
active. Unused scales receive $w_j=0$. The stationarity penalties use local
windows of length $m=128$ and lags $\mathcal L=\{1,2,4,8,16,32\}$.

\paragraph{Fractional orders.}
All adaptive fractional-differencing rules search the grid
$d\in\{0,0.01,\ldots,1\}$; the fixed \texttt{FD(0.5)} baseline stays at
$d=0.5$. The \texttt{FD-adf} baseline selects the smallest order whose
transformed calibration output rejects the ADF unit-root null at 5\%. For
\texttt{FD-context}, this grid is the candidate set of the contextual rule,
applied with the same training and validation split as
\texttt{StaFIR-context}.

\paragraph{Solver.}
The SOCP is solved with CLARABEL, with at most 50 iterations and a tolerance of
$10^{-6}$ on the absolute and relative duality gaps and on feasibility. The
program is always feasible and bounded below
(Appendix~\ref{ap:program_properties}). A training fit on 1{,}280
observations produces 1{,}180 aligned outputs and $I=9$ windows, so it carries
9 weight variables, 7 envelope variables, and 63 slack variables under 63
rotated second-order-cone constraints, together with the weight-budget and
nonnegativity constraints. The calibration re-fit on 1{,}600 observations
gives $I=11$ windows and is slightly larger. The implementation applies a thin
QR factorization to the coefficients and constant terms of the affine
residuals defined in Appendix~\ref{ap:socp_convexity}. Orthogonal invariance
preserves each sum of squared residuals, so each quadratic block can be
evaluated using at most $J+1$ residual components instead of one per
observation.

An infeasible status triggers one retry with the series rescaled by its maximum
absolute value, more iterations, and tighter tolerances; rescaling multiplies
the objective by a constant and leaves the minimizer unchanged. The compiled
DPP problem is reused across the grid. Each numerical solve starts cold,
making a candidate's reported fit independent, up to the
stated solver tolerance, of preceding candidates, grid traversal order, and
parallel-worker chunking. The reported timings use this configuration.
Solver status, retry use, and selected shortfall are recorded.

\paragraph{Stationarity tests.}
Table~\ref{tab:appendix_implementation_details} records the packages,
routines, and settings used for the ADF, PP, and KPSS tests and for the SOCP
solver. All tests use a constant term only. The ADF test uses one fixed lag,
while the PP bandwidth and the KPSS lag length are selected automatically.
\input{tables/experiments}

\subsection{Synthetic data}
\label{ap:synthetic_data_generation}

The ARFIMA model extends the classical ARIMA representation by allowing a
real fractional-integration order $d_{\mathrm{true}}$
\citep{grangerFracDiff1980,hosking1981fractional,baillieLongMem1996}. Under
the usual causality and invertibility conditions, an ARFIMA process is
covariance-stationary for $-1/2<d_{\mathrm{true}}<1/2$, while
$d_{\mathrm{true}}\ge 1/2$ corresponds to nonstationary persistence, the
regime targeted by the main synthetic grid below. A GARCH innovation process
adds conditional heteroscedasticity in the form of volatility clustering
\citep{bollerslev1986garch}.

For the GARCH innovations, we set $a = 0.1p_{\mathrm{garch}}$ and
$b = 0.9p_{\mathrm{garch}}$, so that $a + b < 1$ for all scenarios. The
conditional variance recursion is
\begin{align*}
  \varepsilon_t &= \sigma_t z_t, \qquad z_t\overset{\text{i.i.d.}}{\sim} \sqrt{6/8}\,t_8, \\
  \sigma_t^2 &= \omega + a\varepsilon_{t-1}^2 + b\sigma_{t-1}^2.
\end{align*}
Here, $\omega = 1-p_{\mathrm{garch}}$, and the innovations $z_t$ follow a
Student-$t$ distribution with 8 degrees of freedom, rescaled to unit variance.
The initial unconditional GARCH variance is
$\sigma_0^2 = \omega/(1 - a - b) = 1$ by construction. Because the model is
$\mathrm{ARFIMA}(0,d_{\mathrm{true}},0)$, the short-memory signal is
$u_t = \varepsilon_t$, and long memory is introduced by expanding fractional
integration,
\[
  x_t = (1 - B)^{-d_{\mathrm{true}}}u_t \approx \sum_{k=0}^{t}h_k^{\mathrm{GL}}(-d_{\mathrm{true}})u_{t-k},
\]
where $h_k^{\mathrm{GL}}(-d_{\mathrm{true}})$ is given by the recursion in
Eq.~\eqref{eq:fracdiff_rec} with $-d_{\mathrm{true}}$ in place of $d$. The
integration history therefore grows with $t$, and each retained observation
uses all available simulated presample history. This expanding rule is used
only to generate the data. The evaluated transformations remain fixed-width
filters with the maximum lag of Appendix~\ref{ap:baseline_details}.

\paragraph{Main grid.}
The synthetic scenarios are defined on the Cartesian grid
\[
  d_{\mathrm{true}} \in [0.5, 1.0], \qquad p_{\mathrm{garch}} \in [0.5, 0.99],
\]
with 10 equally spaced values along each axis, including the endpoints,
yielding $10 \times 10 = 100$ scenarios per seed. We use 30 random seeds,
$s\in\{42,43,\ldots,71\}$, for a total of 3{,}000 scenarios. For each
scenario, $N_{\mathrm{obs}} = 2{,}000$ observations are retained after
5{,}000 discarded burn-in observations, which reduces the influence of the
initial GARCH variance and of the initialization of the fractional-integration
history. Each scenario initializes NumPy's \texttt{SeedSequence} with the seed
and its two grid indices, so results are invariant to the scenario iteration
order.

\paragraph{Stationary companion grid.}
To measure unnecessary transformation, we repeat the experiment for
$d_{\mathrm{true}}\in\{0,0.01,\ldots,0.49\}$ and the same 10 equally spaced
persistence values $p_{\mathrm{garch}}\in[0.5,0.99]$ as in the main grid,
giving $50\times10=500$ scenarios per seed and 15{,}000 scenarios in total.
These processes lie below the ARFIMA stationarity boundary. Seeds, sample
length, split, and candidate grids are those of the main grid. The two right-hand panels
of Figure~\ref{fig:synthetic-adaptivity} additionally vary
$p_{\mathrm{garch}}\in[0,0.49]$ on a $20\times10$ grid over
$d_{\mathrm{true}}\in[0,1]$ with the same seeds.

\paragraph{Regime shift scenarios.}
The scheduled-change protocol imitates a regime shift by replacing the grid 
with four two-regime designs. We consider changes of $d_{\mathrm{true}}$ and 
$p_{\mathrm{garch}}$ one at a time: when one parameter moves between 
0.50 and 0.95 in each direction, the other one is held at 0.75. Each design
is one continuous path of $N_{\mathrm{obs}}=5{,}000$ retained observations after 5{,}000
discarded burn-in observations, with the switch at observation 2{,}600. Within
a design family, the forward and reverse paths consume the same standardized
shock stream, so the two directions differ only in the parameter schedule. The
rolling protocol of Appendix~\ref{ap:rolling-protocol} is then applied along
the path with a step of 200 observations, and the standardization,
hyperparameters, and filter weights are re-estimated at every origin. We use
the same 30 seeds. An origin is \emph{pre} when neither its
calibration nor its evaluation segment contains a post-switch observation,
\emph{post} when both consist entirely of post-switch observations, and
\emph{transition} otherwise.

\subsection{Financial series}
\label{ap:real_data_details}

Table~\ref{tab:real-data} lists the four daily log-price series. Origins are
the recalibration dates of the rolling protocol of
Appendix~\ref{ap:rolling-protocol}, and the origin span is the period they
cover. Each series retains its own observation calendar. The Bitcoin index is
observed on a calendar-day grid rather than a trading-day grid. For the other
series, we retain the open-market days supplied by Bloomberg and do not insert
or interpolate holidays.

\begin{table}[t]
  \caption{Daily log-price series after cleaning.}
  \label{tab:real-data}
  \centering
  \scriptsize
  \setlength{\tabcolsep}{4pt}
  \begin{tabular}{@{}llrlrl@{}}
    \toprule
    Series & Underlying & Obs. & Sample span & Origins & Origin span \\
    \midrule
    \texttt{sp500} & S\&P 500 index (\texttt{SPX Index}) & 6{,}539 & 2000-01-03 -- 2025-12-31 & 73 & 2006-05-15 -- 2024-05-22 \\
    \texttt{eurusd} & EUR/USD rate (\texttt{EURUSD Curncy}) & 6{,}783 & 2000-01-03 -- 2025-12-31 & 76 & 2006-02-17 -- 2024-03-29 \\
    \texttt{cds\_idx\_sp} & On-the-run Crossover five-year index spread & 5{,}307 & 2005-08-30 -- 2025-12-31 & 53 & 2011-10-17 -- 2024-05-07 \\
    \texttt{xbx\_index} & CoinDesk Bitcoin Price Index (\texttt{XBX}) & 4{,}077 & 2014-11-03 -- 2025-12-31 & 33 & 2019-03-21 -- 2024-09-26 \\
    \bottomrule
  \end{tabular}
\end{table}

\subsection{Realized-volatility forecasting task}
\label{ap:downstream-task}

\paragraph{Task and pipeline.}
For the forecasting task, define a positive causal realized-variance target
from the trailing five daily returns, and let its log level be
\[
  V_t=\sum_{r=0}^{4}(x_{t-r}-x_{t-r-1})^2>0,
  \qquad \ell_t=\log V_t.
\]
The input series is $\ell_t$. A window with $V_t=0$ is discarded upstream
rather than floored. The four inputs therefore describe the same markets as
the rolling log-price study but form a distinct realized-volatility data set.
The transformation pipeline, calibration windows, selection rule, and fitted
filters are otherwise unchanged. Constructing $V_t$ removes the first five
observations, leaving 234 realized-volatility origins rather than the 235
log-price origins. Only the S\&P~500 count changes, from 73 to 72. At each
origin, a ten-lag ridge autoregression uses the transformed $\ell_t$ values
to make a one-step forecast. The five-day window defines the target, not the
autoregressive order. The model is iterated recursively to
horizons $h\in\{1,5,20,40,80\}$, and the method's fitted causal FIR filter is
inverted over the forecast path to return the prediction
$\widehat\ell_{t+h\mid t}$ to the level of the raw $\ell_t$. Reintegration
uses the observed history for initialization and permits the unit-root
boundary case described in Appendix~\ref{ap:program_properties}. Every method
therefore predicts the same future log realized variance. Only the
representation changes.

\paragraph{Forecasting model and recursion.}
Let $u$ denote a method's transformed standardized log realized variance. For
an origin $o$, calibration start $c=o-C+1$, FIR warm-up of $K$ observations,
and $L=10$ lags, the one-step model uses exactly the rows
$t=\max(c,K)+L-1,\ldots,o-1$, with features $(u_t,\ldots,u_{t-L+1})$ and
target $u_{t+1}$. With the 1{,}280/320 training--validation split, tuning rows
satisfy $t+1\leq c+1279$ and validation rows satisfy $t\geq c+1280$; the
boundary row is unused. For standardized feature vector $z_i$ and response
$y_i=u_{t+1}$, the model solves
\[
 \min_{a,\beta}\;n^{-1}\sum_i(y_i-a-z_i^\top\beta)^2+
 \lambda_{\mathrm{ridge}}\lVert\beta\rVert_2^2,
\]
so the intercept $a$ is unpenalized. The ridge penalty is selected from
$\{10^{-4},10^{-3},10^{-2},10^{-1},1,10,100\}$ by validation MSE, and the
model is then refit on all usable calibration rows.
For learned filters, ridge tuning uses training and validation
representations constructed with the filter refitted on the full
calibration window. Thus, internal validation is not fully held out
from preprocessing estimation, while the reported forecasting
losses are evaluated on subsequent held-out dates.
Recursion starts from the
observed transformed lags at $t$, and predicted transformed values replace
future lags. The FIR inversion is seeded with the observed standardized log
realized variance through $t$. All FIR representations use these same
settings and target dates.

\paragraph{Scoring and uncertainty.}
Scoring is continuous and uses each date once. The positive forecast implied
by the reintegrated log prediction is
$\widehat V_{t+h\mid t}=\exp(\widehat\ell_{t+h\mid t})$, and the loss is
\[
 \operatorname{QLIKE}(V,\widehat V)=\frac{V}{\widehat V}
 -\log\!\left(\frac{V}{\widehat V}\right)-1
 =\exp(\ell-\widehat\ell)-(\ell-\widehat\ell)-1.
\]
The implementation evaluates the final log-domain expression without
materializing the exponentiated forecast. It applies no log-to-level bias
correction, prediction clipping, or floor. Finite extremes are scored, and any
nonfinite prediction or loss aborts the replay with its
series--method--horizon--date context. The main comparison reports QLIKE
relative to \texttt{raw}. Uncertainty uses 10{,}000 paired circular
stationary-bootstrap resamples of the chronological date-level losses within
each series, with expected block length 20 dates. Method and raw losses share
every resample, the raw-loss denominator is recomputed in every draw, and the
four per-series relative effects are averaged with equal weight.
Series are resampled independently, so the panel intervals account
for serial dependence within each series but do not capture
contemporaneous dependence across markets.

\paragraph{Methods included.}
Because recursive reintegration requires a fitted linear filter,
\texttt{rolling-z} is not part of this task. The ADF rule selects $d=0$ at
every realized-volatility origin and therefore coincides with \texttt{raw}.
The \texttt{FD(0.5)} baseline uses the order proposed by
\citet{hasslerPohle2023forecasting}. As an external reference, we additionally
evaluate \texttt{log-HAR}, which predicts log realized variance from its
current, weekly, and monthly averages using a separate direct model at each
forecast horizon \citep{corsi2009har}. Its comparison with the other methods
reflects changes in both the representation and the forecasting model.

\section{Additional Results}
\label{ap:additional-results}
\setcounter{topnumber}{3}
\renewcommand{\topfraction}{0.90}
\renewcommand{\textfraction}{0.05}

This section follows the order of Section~\ref{sec:experiments}. It reports
the full synthetic score table and the regime shift study, the per-series
rolling log-price diagnostics with their paired uncertainty and turnover, and
the direct forecasting comparisons. It then reports the constraint and
representation ablations, the sensitivity of the selection rule, and the
adaptation cost across selection periods.

\subsection{Synthetic grid}
\label{ap:synthetic-additional-results}

Table~\ref{tab:synthetic-results} reports the full score and test table on
the main synthetic grid behind Figure~\ref{fig:synthetic-adaptivity}.

\input{generated/tables/synthetic_results}

On the stationary companion grid, the similarity scores of both contextual
methods remain above 96 across the full range of $d_{\mathrm{true}}$. The
\texttt{FD(0.5)} score instead falls from 92.3 at $d_{\mathrm{true}}=0$ to
79.9 at $d_{\mathrm{true}}=0.49$, while first differencing falls from 81.4 to
62.5. At $d_{\mathrm{true}}=0$, first differencing also lowers the
stationarity score from 80.4 for the raw series to 79.1. These results
quantify the unnecessary transformation visible in the left half of
Figure~\ref{fig:synthetic-adaptivity}.

\paragraph{Diagnostics within the stationary range.}
Table~\ref{tab:stationary-input-diagnostics} reports diagnostics on the raw
inputs at four selected orders in the stationary companion grid. KPSS
non-rejection falls from 97.3\% at $d_{\mathrm{true}}=0$ to 37.7\% at
$d_{\mathrm{true}}=0.49$, although all four parameter settings lie below
the theoretical covariance-stationarity boundary. This is consistent with
the short-memory calibration of KPSS, which can reject covariance-stationary
fractionally integrated alternatives~\citep{leeSchmidt1996kpss}.
The mean stationarity score also falls, from 80.4 to 73.1, showing that it
remains sensitive to stationary persistence. Thus, $\Sstat$
measures finite-window empirical stability and varies even within the
covariance-stationary parameter range.

\begin{table}[htbp]
  \caption{Diagnostics on raw inputs in the stationary companion grid.
  Each row averages 30 seeds and 10 equally spaced GARCH persistence settings
  in $[0.5,0.99]$, for 300 evaluations on untransformed held-out segments of
  400 observations. KPSS uses a constant, automatic lag selection, and a
  5\% significance level. No evaluation in these rows failed or lacked a
  KPSS decision. All four fractional-integration orders lie below the
  theoretical covariance-stationarity boundary.}
  \label{tab:stationary-input-diagnostics}
  \centering
  \small
  \begin{tabular}{@{}rrr@{}}
    \toprule
    $d_{\mathrm{true}}$ & Mean $\Sstat$ & KPSS non-rejection (\%) \\
    \midrule
    0.00 & 80.4 & 97.3 \\
    0.20 & 79.1 & 71.7 \\
    0.40 & 75.3 & 47.0 \\
    0.49 & 73.1 & 37.7 \\
    \bottomrule
  \end{tabular}
\end{table}

\FloatBarrier
\subsection{Regime shift through scheduled parameter changes}
\label{ap:regime-shift-results}

The regime shift experiment of Appendix~\ref{ap:synthetic_data_generation}
contains increases and decreases in either the fractional-integration order
or the GARCH persistence. Table~\ref{tab:synthetic-regime-transition} reports
results by phase with standard errors over 30 seeds. Both contextually
selected methods adjust in the expected direction. During transition windows,
\texttt{StaFIR-context} generally pays less similarity cost but has slightly
higher mean turnover than \texttt{FD-context}.

\input{generated/tables/regime_results}

\FloatBarrier
\subsection{Rolling log-price series}
\label{ap:real-additional-results}

\texttt{StaFIR-context} and \texttt{FD-context} share the training and
validation split and the contextual selection rule and differ only in the
candidate filter family. The ADF, PP, and KPSS outcomes are held-out
diagnostics, except that \texttt{FD-adf} uses ADF on its calibration segment
to choose $d$. Reported metrics condition on a successful fit, while failure
counts retain every attempted fit.

\paragraph{Score components and tests.}
Table~\ref{tab:real-log-price-diagnostics} breaks the two aggregate scores of
Appendix~\ref{ap:score_definitions} into their three stationarity and three
similarity components and places them next to the test outcomes, pooled over
the four series. For \texttt{StaFIR-context}, ADF rejects the unit-root
null at most origins, whereas KPSS has a low non-rejection rate. ADF and PP
assess the same unit-root null, while KPSS assesses stationarity under a
short-memory calibration. This calibration makes KPSS sensitive to
stationary long memory, as illustrated by the raw-input results in
Table~\ref{tab:stationary-input-diagnostics}.

\input{generated/tables/real_log_price_diagnostics}

\paragraph{Local Whittle residual memory.}
The LW $\widehat d$ column of Table~\ref{tab:real-tests-main} applies the
local Whittle estimator of Section~\ref{sec:related_work_stat} to each
held-out output with bandwidth $m=\lfloor400^{0.65}\rfloor=49$. Estimates are
pooled across rolling origins. The reported intervals are 95\%
stationary-bootstrap intervals with 10{,}000 replications and an expected
block length of seven origins.

For \texttt{StaFIR-context}, the within-series means of the held-out local
Whittle estimates are 0.47 for the credit index, 0.46 for EUR/USD, 0.49 for
Bitcoin, and 0.67 for the S\&P~500. These are descriptive estimates across
rolling origins. The pooled estimate, $0.53\,[0.49,0.58]$, places residual
memory near the $d=0.5$ stationarity boundary, with the interval spanning both
sides.

\paragraph{Paired uncertainty.}
\label{ap:real-paired-main}
Table~\ref{tab:real-paired-comparisons} forms differences within series and
rolling origin, then applies a stationary bootstrap separately to each
chronological series. It uses 10{,}000 resamples, a fixed seed, and an
expected block length of seven origins. The same draw is reused across metrics
and comparators. The panel first averages paired means within series and then
gives the four series equal weight. At expected block lengths 13 and 25, the
\texttt{StaFIR-context}--\texttt{FD-context} stationarity-gain intervals
remain above zero, respectively $[0.50,1.60]$ and $[0.61,1.50]$. The
similarity-cost intervals, $[-2.55,0.59]$ and $[-2.34,0.35]$, both cross
zero. The $\Delta|\widehat d\,|$ column applies the same procedure to the
absolute local Whittle residual memory. The equal-weight interval,
$-0.08$ $[-0.11,-0.05]$, excludes zero on an estimator neither method
optimizes, with the S\&P~500 showing a negligible mean difference. These intervals quantify serial
uncertainty conditional on the four observed series, with marginal coverage
for each contrast.

\input{generated/tables/real_paired_comparisons}

\paragraph{Turnover frequency and magnitude.}
Effect turnover has a frequency and a magnitude.
Figure~\ref{fig:real-change-behavior} shows that \texttt{StaFIR-context}
changes its transformation materially at more origins than the fractional
rules, but its realized changes are much smaller. \texttt{FD-context} instead
makes less frequent, larger jumps in the selected order.

\begin{figure*}[t]
  \centering
  \includegraphics[width=0.95\textwidth]{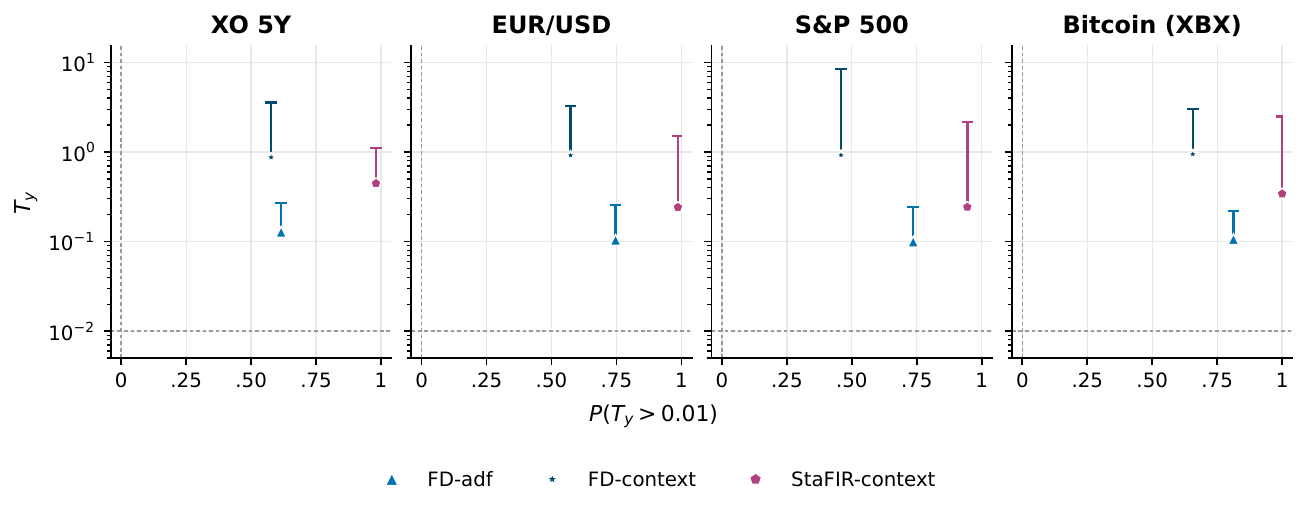}
  \caption{Frequency and magnitude of material transformation changes for the
  three adaptive methods on the log-price series. The horizontal axis,
  $P(T_y>0.01)$, is the share of consecutive origin pairs whose effect turnover
  $T_y$ exceeds 0.01. On its log scale, the
  vertical axis reports the median $T_y$ among only those origin pairs that
  exceed the threshold.
  Whiskers extend to the conditional 90th percentile.}
  \label{fig:real-change-behavior}
\end{figure*}

\subsection{Realized-volatility forecasting}
\label{ap:downstream-additional-results}

The task, its pipeline, and its uncertainty construction are defined in
Appendix~\ref{ap:downstream-task}. The comparison against \texttt{raw} is
Table~\ref{tab:downstream-results} in the main text, and
Figure~\ref{fig:downstream-horizons} shows it per series. The conclusions are
stable across bootstrap block lengths: at expected block lengths 10, 40, and
80 dates, the \texttt{StaFIR-context} intervals through $h=40$ stay below
zero, and the $h=80$ interval crosses zero at every length.

\begin{figure*}[t]
  \centering
  \includegraphics[width=0.95\textwidth]{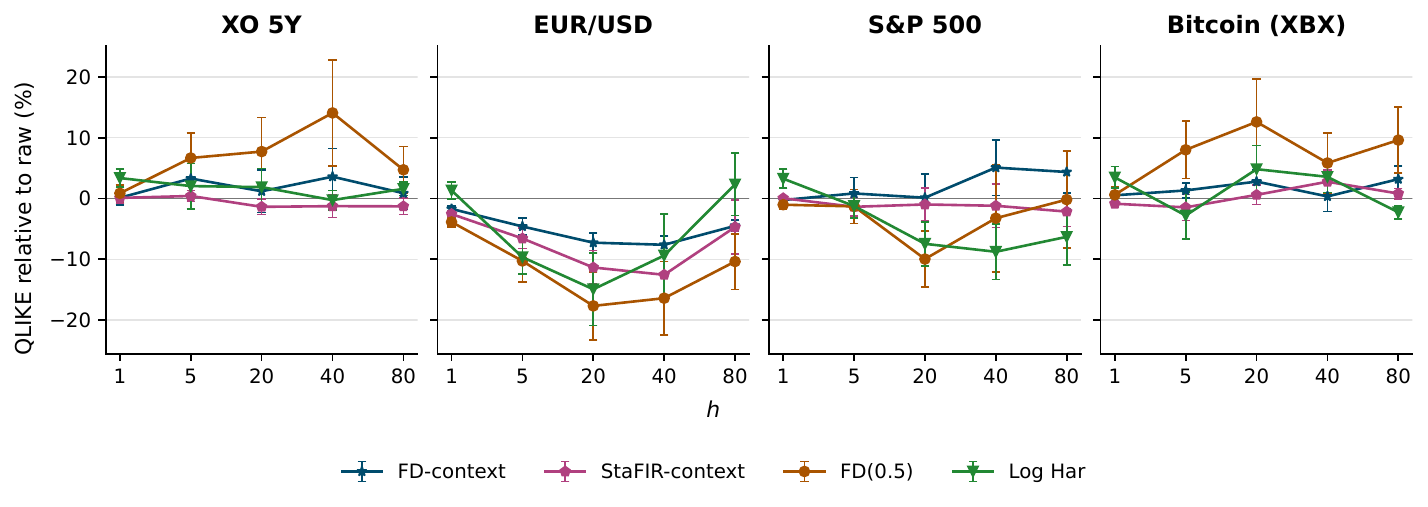}
  \caption{Per-series forecast performance on the log five-day
  realized-variance series. At each forecast horizon $h$, points report QLIKE
  relative to \texttt{raw}, in percent.
  Whiskers are one paired stationary-bootstrap standard error.
  \texttt{FD-context}, \texttt{StaFIR-context}, and \texttt{FD(0.5)} share the
  common forecasting model. \texttt{log-HAR} is an external task-specific
  reference with a different, direct forecasting model. \texttt{FD-adf}
  coincides with \texttt{raw} and is omitted.}
  \label{fig:downstream-horizons}
\end{figure*}

\paragraph{Pairwise comparisons with reintegration.}
Table~\ref{tab:downstream-direct-comparisons} compares
\texttt{StaFIR-context} directly with the two fractional benchmarks and a raw
AR(110) reference with matched historical context. For each
series and horizon, the estimand is the paired mean QLIKE difference divided
by the mean QLIKE of that comparator. The four series-level percentages are
then averaged with equal weight. Each contrast is computed from the paired
losses using its comparator's denominator. Against \texttt{FD-context}, only the
$h=5$ interval excludes zero. Against \texttt{FD(0.5)}, no interval excludes
zero. The 95\% coverage is marginal for each horizon--comparator contrast.

\input{generated/tables/downstream_direct_comparisons}

\paragraph{Matched historical context.}
For $K=100$, ten consecutive FIR outputs depend on 110 raw observations.
The \texttt{raw AR(110)} control uses these same 110 observations as separate
ridge predictors, with the stored origin standardizer, training, validation,
and refit target dates, ridge grid, and one-step recursive forecast strategy
held fixed. Its identity kernel requires no reintegration. Thus the comparison
matches available history while allowing different feature dimensions:
110 raw predictors versus ten filtered predictors. Across 234 rolling origins,
both methods score all 80{,}846 forecast--horizon pairs without failures.

Relative to this control, \texttt{StaFIR-context} reduces panel QLIKE by
3.92\% at $h=1$ and 8.24\% at $h=5$, with marginal 95\% intervals
$[-5.35,-2.47]$ and $[-12.03,-4.05]$ percentage points, respectively.
The point estimates also favor StaFIR at $h=20,40,80$, but these intervals
include zero. These interval conclusions hold at expected bootstrap block
lengths 10, 20, 40, and 80 dates. The effect varies across series: CDS favors
StaFIR at every horizon, whereas S\&P~500 point estimates favor AR(110) at
$h=20,40,80$, with intervals including zero. AR(110) itself has higher panel
loss than the ten-lag \texttt{raw} reference by 3.24\% and 6.75\% at $h=1,5$,
with intervals excluding zero. The comparison therefore supports a benefit
from the constrained representation beyond access to additional history;
it does not isolate the contribution of the stationarity penalties or of
mixing multiple decay scales.

\paragraph{Sensitivity to a direct forecasting head.}
\label{ap:direct-head-sensitivity}
We replay the fitted filters on the same four log-realized-variance series
using a separate direct ridge model for each horizon
$h\in\{1,5,20,40,80\}$. Each model predicts the future raw log realized
variance $\ell_{t+h}$ directly from an input consisting of ten transformed
lags. The calibration windows,
training--validation split, ridge grid, and continuous scoring convention
are those of Appendix~\ref{ap:downstream-task}; training targets are kept
within the training segment and all refit targets within the calibration
window.

\input{generated/tables/downstream_direct_head_sensitivity}

Table~\ref{tab:downstream-direct-head-sensitivity} reports paired QLIKE
contrasts for the contextual StaFIR selection. It has lower loss than
\texttt{FD(0.5)} at $h=1,5,20$, but higher loss than \texttt{raw} at
$h=1,5,20,40$. These interval conclusions hold at expected bootstrap block
lengths 10, 20, 40, and 80 dates. At $h=80$, neither contrast excludes zero.
Comparisons with \texttt{FD-context} are less stable: the $h=1$ advantage
includes zero at block length 80, and the small $h=80$ disadvantage includes
zero at block lengths 40 and 80. We therefore interpret this experiment as
sensitivity to the downstream forecasting model.

Reconstruction from the filtered output requires the fitted kernel and the
observed history specified in Appendix~\ref{ap:program_properties}. The direct
model's ten-lag input omits this reconstruction information. The two
experiments therefore assess different uses of the filtered representation:
recursive prediction followed by reintegration, and direct prediction from
transformed lags. Isolating the contribution of reintegration would require
holding the forecasting strategy fixed.

\FloatBarrier
\subsection{Constraint and representation ablations}
\label{ap:constraint-ablations}

\paragraph{Decay scales and second-order penalties.}
Two ablations isolate the contributions of the number of decay scales and of
the local second-order penalties
(Tables~\ref{tab:synthetic-ablation-results}
and~\ref{tab:real-ablation-results}). Restricting \texttt{StaFIR-context} to
seven decay scales changes the synthetic and log-price scores only slightly.
Removing the second-order penalties,
\texttt{StaFIR-context-no-2nd-order}, instead raises
similarity but lowers empirical stationarity and the ADF rejection rate. On the
log-price series it also increases mean effect turnover from 0.725 to 0.984.
The results therefore attribute the main effect to the penalty structure
rather than to the width of the decay-scale basis.

\input{generated/tables/synthetic_ablation_results}

\input{generated/tables/real_ablation_results}

\paragraph{Exponential parameterization against free lag coefficients.}
The two tables also ask whether the exponential parameterization itself
sacrifices empirical quality. The ablation,
\texttt{StaFIR-context-free-lags}, replaces the nine mixture weights with 100
independent nonnegative lag-weight magnitudes while retaining the selected
$(\lambda_s,\lambda)$ of \texttt{StaFIR-context} at every scenario or rolling
origin. The comparison measures the effect of the representation at the
penalties selected for the nine-scale model, rather than penalties tuned for
the larger parameterization. Across the 3{,}000 synthetic scenarios and the
rolling log-price series, its scores, test pass rates, turnover, and fit
success are close to those of the exponential representation.

The larger model is substantially more expensive to estimate. In a matched
re-fit benchmark on the first calibration window of each log-price series,
five timed fits per representation followed one untimed warm-up. The median
fit time with free lag coefficients was 30.4 times that of the exponential
parameterization when pooled over the four series. This timing excludes
candidate selection and artifact I/O.
Once fitted, both representations deploy the same 101-coefficient causal FIR
filter and therefore have the same inference arithmetic and state. The
exponential parameterization is thus a low-dimensional inductive bias. It
preserves empirical quality while reducing the fitted lag parameters from 100
to 9 and the pooled re-fit time by a factor of 30.4. The block compression of
Appendix~\ref{ap:implementation_details} benefits the smaller representation
more strongly.

\FloatBarrier
\subsection{Sensitivity of the selection rule}
\label{ap:selection-sensitivity}

\paragraph{Target fraction and shortfall curvature.}
The main experiments fix $\zeta=0.9$ and $\gamma_{\mathrm{sel}}=0.4$ in the
rule of Appendix~\ref{ap:empirical-rule}. A focused sensitivity experiment
reruns the \texttt{StaFIR-context} selection on the S\&P~500 and Bitcoin
log-price series over
\[
  \zeta\in\{0,0.9,1.0\},
  \qquad
  \gamma_{\mathrm{sel}}\in\{0.2,0.4,0.8\},
\]
holding $c_{\mathrm{sw}}=10$, the candidate grids, the rolling split, and all
other protocol settings fixed. Table~\ref{tab:selection-sensitivity} reports
the nine cells for each series.

\input{generated/tables/selection_sensitivity_results}

\paragraph{Switching cost.}
On the four log-price series, raising $c_{\mathrm{sw}}$ from 0 to 40 lowers
the \texttt{StaFIR-context} selection-change rate from 91.3\% to 12.6\% and
mean effect turnover $T_y$ from 0.824 to 0.620. Mean similarity falls from
84.3 to 73.0 while stationarity rises from 65.6 to 68.3;
$c_{\mathrm{sw}}=10$ provides an intermediate trade-off. The effect of the
selection period $M$ on turnover and cost is reported in
Table~\ref{tab:real-refit-period}.

\FloatBarrier
\subsection{Adaptation cost and selection period}
\label{ap:adaptation-cost}

\paragraph{Measurement boundary.}
\label{ap:cost-measurement}
Table~\ref{tab:cost} separates hyperparameter selection from a carried re-fit.
Each end-to-end adaptation timer starts with a raw calibration window, fits the
training-prefix mean and population standard deviation, applies the resulting
standardization, and then runs the relevant stage. Selection includes every candidate fit and score---all
98 solves for \texttt{StaFIR-context}---and a full-calibration re-fit of the
selected transformation; a carried re-fit re-estimates the weights at fixed
hyperparameters or rebuilds the fractional kernel.
The batch-one inference timer includes fixed standardization, insertion into a
float64 ring buffer, and the 101-coefficient causal FIR dot product for one raw
observation. It excludes construction, adaptation, input preparation, and
artifact I/O. Persistent numeric state comprises 101 coefficients, a
101-sample buffer, and the fitted mean and scale: 204 float64 values, or
1.59\,KiB.

The adaptation resource benchmark uses the same raw-window boundary in a fresh
subprocess. For each stage, resident process memory is reported as the absolute
peak followed by its increase above process start. Wall time measures stage latency;
CPU time measures computational work. For selection period $M$, the reported
cycle cost is
$C_{\mathrm{cycle}}(M)=C_{\mathrm{select}}+(M-1)C_{\mathrm{carried}}$, and the
amortized cost per origin is $C_{\mathrm{cycle}}(M)/M$. Timings are measurements
on the development host under the worker and numerical-library thread settings
recorded with its hardware, operating system, packages, and raw repetitions in
the benchmark artifact.

\input{generated/tables/cost_results}

The remainder of this section amortizes the two adaptation stages over the
selection period.

\paragraph{Selection period on the log-price series.}
Table~\ref{tab:real-refit-period} pairs the held-out rolling log-price
outcomes at each selection period $M\in\{1,2,4,8,\infty\}$ with the amortized
adaptation cost per origin, retaining $c_{\mathrm{sw}}=10$. The fractional
rules are evaluated under the same accounting. Selecting less often lowers
turnover and cost for both contextual methods while leaving similarity almost
unchanged, but freezing the first selection ($M=\infty$) lowers similarity
markedly.

\input{generated/tables/refit_period_results}

\paragraph{Selection period in the forecasting task.}
Table~\ref{tab:downstream-refit-period} reports direct paired stationary-bootstrap QLIKE contrasts between 
selection at every origin and each less frequent schedule. Each schedule varies
the selection frequency of $(\lambda_s,\lambda)$ while re-fitting the
standardization, filter weights, and forecasting model at every origin.
The intervals favor $M=1$ over $M=2$ at
horizons 5 and 20, over $M=4$ at horizons 1--40, over $M=8$ at horizons 20 and 40, 
and over the frozen schedule at horizons 1, 20, and 40. The frozen-schedule interval 
at horizon 5 narrowly includes zero, and all cadence-contrast intervals include
zero at horizon 80. The gains from selection at every origin depend on both
the comparison schedule and the forecast horizon.

\input{generated/tables/downstream_refit_period_results}

%% file: tables/experiments.tex
\begin{table}[t]
\centering
\small
\caption{Implementation details for tests and solver.}
\label{tab:appendix_implementation_details}
\begin{tabularx}{\textwidth}{@{}p{2.2cm}p{3.3cm}p{3.2cm}X@{}}
\toprule
\textbf{Component} &
\textbf{Package} &
\textbf{Routine} &
\textbf{Specification} \\
\midrule

ADF test &
\texttt{statsmodels} v\texttt{0.14.6} &
\texttt{statsmodels.tsa.}\allowbreak\texttt{stattools.adfuller} &
Unit-root null. Constant only, \texttt{regression = "c"}, with one fixed lag, \texttt{maxlag = 1} and \texttt{autolag = None}. Parameters from baseline configuration in \citet{DePrado2019}. We report the statistic, \(p\)-value, and \(5\%\) rejection indicator. \\

\addlinespace

PP test &
\texttt{arch} v\texttt{8.0.0} &
\texttt{arch.unitroot.}\allowbreak\texttt{PhillipsPerron} &
Unit-root null. Constant only, \texttt{trend = "c"}. Bandwidth / lag length is chosen automatically with \texttt{lags = None}. We report the statistic, \(p\)-value, selected bandwidth / lag, and \(5\%\) rejection indicator. \\

\addlinespace

KPSS test &
\texttt{statsmodels} v\texttt{0.14.6} &
\texttt{statsmodels.tsa.}\allowbreak\texttt{stattools.kpss} &
Level-stationarity null. Constant only, \texttt{regression = "c"}. Lag length is selected using \texttt{nlags = "auto"}. We report the statistic, interpolated \(p\)-value, selected lag length, and \(5\%\) rejection indicator. \\

\addlinespace

SOCP solver &
\texttt{cvxpy} v\texttt{1.9.2} with \texttt{clarabel} v\texttt{0.11.1} &
\texttt{CLARABEL} through the \texttt{cvxpy} modeling layer &
Solves the filter SOCP in Section~\ref{sec:methods}. We record solver status, objective value, iterations, feasibility, and runtime. Settings are \texttt{max\_iter = 50} and \texttt{tol\_gap\_abs = tol\_gap\_rel = tol\_feas = }\(10^{-6}\). \\

\bottomrule
\end{tabularx}
\end{table}

%% file: generated/tables/synthetic_results.tex
\begin{table*}[t]
  \caption{Synthetic-data evaluation. The score columns are mean $\pm$ sample SD across independent root seeds after averaging the scenario grid within each seed, and are conditional on successful fits. The three test columns report decision rates over the scenarios at which the test returned a decision, pooled over the whole grid: rejection of the ADF and PP unit-root nulls and non-rejection of the KPSS stationarity null. Fail reports failed fits over attempted fits, and a failed fit is never counted as a failed test. Figure~\ref{fig:synthetic-adaptivity} summarizes the same comparison on the score columns. }
  \label{tab:synthetic-results}
  \centering
  \scriptsize
  \setlength{\tabcolsep}{4pt}
  \begin{tabular}{lrr!{\color{black!60}\vrule width 0.4pt}rrr!{\color{black!60}\vrule width 0.4pt}r}
    \toprule
    & & & \multicolumn{3}{c}{Test outcomes (\%)} & \\
    Method & Stationarity & Similarity & \shortstack{ADF\\rejection} & \shortstack{PP\\rejection} & \shortstack{KPSS\\non-rejection} & Fail \\
    \midrule
    \texttt{raw} & 65.3 $\pm$ 0.6 & -- & 61 & 63 & 19 & 0/3000 \\
    \texttt{1-diff} & 80.4 $\pm$ 0.6 & 50.3 $\pm$ 0.3 & 100 & 100 & 99 & 0/3000 \\
    \texttt{EWMA} & 70.1 $\pm$ 0.5 & 82.5 $\pm$ 0.8 & 85 & 88 & 62 & 0/3000 \\
    \texttt{rolling-z} & 70.0 $\pm$ 0.5 & 83.1 $\pm$ 0.8 & 83 & 86 & 54 & 0/3000 \\
    \texttt{FD(0.5)} & 78.2 $\pm$ 0.5 & 74.4 $\pm$ 0.4 & 100 & 100 & 71 & 0/3000 \\
    \texttt{FD-adf} & 66.2 $\pm$ 0.6 & 99.3 $\pm$ 0.1 & 71 & 74 & 21 & 0/3000 \\
    \texttt{FD-context} & 74.0 $\pm$ 0.8 & 86.0 $\pm$ 1.2 & 95 & 96 & 48 & 0/3000 \\
    \texttt{StaFIR-context} & 73.2 $\pm$ 0.9 & 88.2 $\pm$ 1.1 & 95 & 97 & 31 & 0/3000 \\
    \bottomrule
  \end{tabular}
\end{table*}

%% file: generated/tables/regime_results.tex
\begin{table*}[t]
  \caption{Response to a within-series regime shift. Origins are grouped by whether the rolling
  calibration window is before, across, or after the scheduled break. Entries
  are means over independent seeds with standard errors in parentheses.
  Similarity cost is shown for the transition phase, on which the comparison
  of Appendix~\ref{ap:regime-shift-results} rests. Fixed transformations are
  omitted because they do not recalibrate.
  }
  \label{tab:synthetic-regime-transition}
  \centering
  \scriptsize
  \setlength{\tabcolsep}{3pt}
  \begin{tabular}{lrrr!{\color{black!60}\vrule width 0.4pt}r!{\color{black!60}\vrule width 0.4pt}rrr}
    \toprule
    & \multicolumn{3}{c}{Stationarity gain} & Similarity cost & \multicolumn{3}{c}{Effect turnover $T_y$} \\
    Method & Pre & Trans. & Post & Trans. & Pre & Trans. & Post \\
    \midrule
    \multicolumn{8}{@{}l}{\textbf{Memory increases (\texttt{d\_up})}} \\
    \texttt{FD-adf} & 0.00 (0.00) & 2.91 (0.26) & 3.27 (0.35) & 2.03 (0.18) & 0.000 (0.000) & 0.171 (0.015) & 0.206 (0.017) \\
    \texttt{FD-context} & 4.73 (0.60) & 11.21 (0.43) & 14.62 (0.92) & 19.62 (0.74) & 0.093 (0.023) & 1.159 (0.261) & 1.669 (0.406) \\
    \texttt{StaFIR-context} & 4.60 (0.50) & 9.47 (0.42) & 13.46 (0.97) & 16.08 (0.69) & 0.174 (0.021) & 1.189 (0.192) & 1.325 (0.234) \\
    \midrule
    \multicolumn{8}{@{}l}{\textbf{Memory decreases (\texttt{d\_down})}} \\
    \texttt{FD-adf} & 2.69 (0.37) & 2.97 (0.29) & 0.00 (0.00) & 4.55 (0.59) & 0.164 (0.016) & 0.215 (0.016) & 0.000 (0.000) \\
    \texttt{FD-context} & 13.27 (1.03) & 6.21 (0.47) & 6.38 (0.45) & 10.95 (0.50) & 1.356 (0.296) & 1.278 (0.177) & 0.094 (0.036) \\
    \texttt{StaFIR-context} & 11.72 (0.98) & 5.95 (0.39) & 4.21 (0.44) & 11.81 (0.57) & 1.141 (0.192) & 1.308 (0.241) & 0.260 (0.025) \\
    \midrule
    \multicolumn{8}{@{}l}{\textbf{Volatility persistence increases (\texttt{garch\_up})}} \\
    \texttt{FD-adf} & 0.16 (0.07) & 0.09 (0.04) & 0.06 (0.06) & 0.03 (0.01) & 0.023 (0.008) & 0.014 (0.005) & 0.005 (0.004) \\
    \texttt{FD-context} & 12.07 (0.66) & 7.85 (0.41) & 6.95 (0.46) & 13.82 (0.65) & 0.384 (0.062) & 0.611 (0.068) & 0.604 (0.107) \\
    \texttt{StaFIR-context} & 9.92 (0.52) & 7.12 (0.37) & 6.94 (0.66) & 12.25 (0.60) & 0.494 (0.045) & 0.661 (0.049) & 0.543 (0.071) \\
    \midrule
    \multicolumn{8}{@{}l}{\textbf{Volatility persistence decreases (\texttt{garch\_down})}} \\
    \texttt{FD-adf} & 0.13 (0.07) & 0.06 (0.02) & 0.01 (0.01) & 0.01 (0.01) & 0.026 (0.009) & 0.011 (0.004) & 0.003 (0.002) \\
    \texttt{FD-context} & 6.44 (0.54) & 11.15 (0.34) & 12.02 (0.65) & 16.30 (0.68) & 0.507 (0.079) & 0.516 (0.063) & 0.466 (0.090) \\
    \texttt{StaFIR-context} & 6.16 (0.59) & 9.43 (0.32) & 11.12 (0.70) & 13.60 (0.51) & 0.621 (0.072) & 0.579 (0.056) & 0.565 (0.084) \\
    \bottomrule
  \end{tabular}
\end{table*}

%% file: generated/tables/real_log_price_diagnostics.tex
\begin{table*}[t]
  \caption{Rolling log-price diagnostics pooled over 235 origins. Test
  columns report ADF and PP rejection rates and KPSS non-rejection rates (\%);
  $M,V,G$ are the stationarity components and
  $\rho_{\mathrm P},\rho_{\mathrm P,\mathcal T},\mathrm{dCor}$ the
  similarity components. Aggregate scores are on a 0--100 scale. $\overline
  T_y$ is mean effect turnover over 231 predecessor pairs; ``--'' denotes
  fixed transforms. Pooling is descriptive because origins overlap.
  }
  \label{tab:real-log-price-diagnostics}
  \centering
  \scriptsize
  \setlength{\tabcolsep}{2.6pt}
  \resizebox{\textwidth}{!}{%
  \begin{tabular}{@{}lccc!{\color{black!60}\vrule width 0.4pt}ccc!{\color{black!60}\vrule width 0.4pt}ccc!{\color{black!60}\vrule width 0.4pt}rrr@{}}
    \toprule
    & \multicolumn{3}{c}{Test outcomes (\%)}
    & \multicolumn{3}{c}{Stationarity components}
    & \multicolumn{3}{c}{Similarity components}
    & \multicolumn{2}{c}{Aggregate score}
    & \\
    Method & \shortstack{ADF\\rejection} & \shortstack{PP\\rejection} & \shortstack{KPSS\\non-rejection} & $M$ & $V$ & $G$
      & $\rho_{\mathrm P}$ & $\rho_{\mathrm P,\mathcal T}$ & dCor
      & $S_{\mathrm{stat}}$ & $S_{\mathrm{sim}}$ & $\overline T_y$ \\
    \midrule
    \texttt{raw} & 7 & 6 & 5 & 63.5 & 61.3 & 65.7 & 1.00 & 1.00 & 1.00 & 57.6 & 100.0 & -- \\
    \texttt{1-diff} & 100 & 100 & 97 & 94.3 & 74.9 & 72.3 & 0.07 & 0.19 & 0.16 & 71.9 & 43.0 & -- \\
    \texttt{EWMA} & 42 & 44 & 34 & 70.6 & 64.0 & 65.6 & 0.48 & 0.78 & 0.52 & 61.3 & 71.7 & -- \\
    \texttt{rolling-z} & 30 & 31 & 34 & 70.1 & 65.7 & 66.6 & 0.48 & 0.80 & 0.53 & 62.2 & 72.4 & -- \\
    \texttt{FD(0.5)} & 100 & 100 & 31 & 77.1 & 72.2 & 70.3 & 0.56 & 0.80 & 0.54 & 67.7 & 73.9 & -- \\
    \addlinespace[1pt]
    \texttt{FD-adf} & 38 & 43 & 11 & 66.1 & 64.1 & 66.8 & 0.95 & 0.98 & 0.94 & 59.9 & 96.8 & 0.099 \\
    \texttt{FD-context} & 71 & 72 & 26 & 73.2 & 69.6 & 69.5 & 0.70 & 0.85 & 0.69 & 65.0 & 82.3 & 1.206 \\
    \texttt{StaFIR-context} & 77 & 80 & 10 & 72.5 & 71.5 & 71.0 & 0.73 & 0.84 & 0.72 & 66.0 & 83.5 & 0.718 \\
    \bottomrule
  \end{tabular}}
\end{table*}

%% file: generated/tables/real_paired_comparisons.tex
\begin{table*}[t]
  \caption{Paired rolling log-price comparisons. Entries are mean within-origin
  differences with percentile 95\% stationary-bootstrap intervals
  (10,000 replications; expected block length
  7 origins). Positive stationarity differences favor
  \texttt{StaFIR-context}; positive similarity-cost differences indicate additional
  preservation cost. The final panel gives equal weight to the four series. $\Delta|\widehat d\,|$ is the paired difference in absolute local Whittle residual memory (Table~\ref{tab:real-tests-main}); negative values favor \texttt{StaFIR-context}.
  }
  \label{tab:real-paired-comparisons}
  \centering
  \scriptsize
  \setlength{\tabcolsep}{6pt}
  \begin{tabular}{lrrrr}
    \toprule
    Baseline & $\Delta S_{\mathrm{stat}}$ & $\Delta C_{\mathrm{sim}}$ & $\Delta|\widehat d\,|$ & Paired origins \\
    \midrule
    \multicolumn{5}{@{}l}{\textbf{XO 5Y}} \\
    \texttt{FD-context} & 0.9 [-0.9, 2.5] & -2.2 [-7.0, 2.4] & -0.08 [-0.15, -0.02] & 53 \\
    \texttt{FD(0.5)} & -2.2 [-4.4, -0.3] & -9.3 [-13.3, -5.1] & 0.03 [-0.04, 0.11] & 53 \\
    \texttt{1-diff} & -6.3 [-8.4, -4.3] & -40.5 [-44.2, -36.9] & 0.38 [0.32, 0.46] & 53 \\
    \texttt{FD-adf} & 8.5 [6.3, 10.6] & 15.9 [12.1, 19.7] & -0.41 [-0.51, -0.32] & 53 \\
    \texttt{raw} & 9.8 [7.9, 11.6] & 16.8 [13.1, 20.4] & -0.48 [-0.55, -0.40] & 53 \\
    \midrule
    \multicolumn{5}{@{}l}{\textbf{EUR/USD}} \\
    \texttt{FD-context} & 1.4 [0.4, 2.5] & 0.6 [-1.2, 2.8] & -0.11 [-0.17, -0.06] & 76 \\
    \texttt{FD(0.5)} & -1.1 [-2.7, 0.5] & -5.6 [-10.0, -1.4] & -0.01 [-0.08, 0.07] & 76 \\
    \texttt{1-diff} & -6.0 [-8.5, -4.0] & -36.9 [-41.3, -32.7] & 0.41 [0.33, 0.49] & 76 \\
    \texttt{FD-adf} & 8.1 [6.5, 9.6] & 17.4 [13.3, 21.3] & -0.38 [-0.44, -0.31] & 76 \\
    \texttt{raw} & 10.4 [8.5, 12.1] & 20.1 [15.3, 24.5] & -0.50 [-0.58, -0.42] & 76 \\
    \midrule
    \multicolumn{5}{@{}l}{\textbf{S\&P 500}} \\
    \texttt{FD-context} & 0.2 [-0.3, 0.8] & -3.5 [-6.6, -0.9] & -0.00 [-0.04, 0.05] & 73 \\
    \texttt{FD(0.5)} & -2.6 [-4.5, -1.1] & -17.2 [-20.9, -13.6] & 0.24 [0.16, 0.32] & 73 \\
    \texttt{1-diff} & -5.4 [-8.5, -3.0] & -46.3 [-50.0, -42.3] & 0.57 [0.47, 0.66] & 73 \\
    \texttt{FD-adf} & 2.0 [0.9, 3.4] & 5.7 [1.8, 9.6] & -0.09 [-0.18, -0.01] & 73 \\
    \texttt{raw} & 3.1 [1.8, 4.8] & 9.9 [6.3, 13.7] & -0.28 [-0.36, -0.20] & 73 \\
    \midrule
    \multicolumn{5}{@{}l}{\textbf{Bitcoin (XBX)}} \\
    \texttt{FD-context} & 1.7 [0.2, 3.2] & 1.1 [-1.9, 4.6] & -0.12 [-0.17, -0.07] & 33 \\
    \texttt{FD(0.5)} & -0.1 [-2.9, 2.5] & -2.2 [-8.5, 4.1] & -0.05 [-0.15, 0.06] & 33 \\
    \texttt{1-diff} & -6.4 [-9.1, -3.9] & -35.9 [-42.6, -29.8] & 0.43 [0.32, 0.56] & 33 \\
    \texttt{FD-adf} & 6.7 [2.8, 10.2] & 16.3 [9.1, 23.0] & -0.31 [-0.42, -0.19] & 33 \\
    \texttt{raw} & 12.9 [7.5, 17.7] & 22.5 [15.3, 29.0] & -0.51 [-0.62, -0.37] & 33 \\
    \midrule
    \multicolumn{5}{@{}l}{\textbf{Equal-weight four-series panel}} \\
    \texttt{FD-context} & 1.1 [0.4, 1.7] & -1.0 [-2.7, 0.7] & -0.08 [-0.11, -0.05] & 235 \\
    \texttt{FD(0.5)} & -1.5 [-2.5, -0.5] & -8.6 [-10.9, -6.2] & 0.05 [0.01, 0.09] & 235 \\
    \texttt{1-diff} & -6.0 [-7.3, -4.8] & -39.9 [-42.3, -37.6] & 0.45 [0.40, 0.50] & 235 \\
    \texttt{FD-adf} & 6.3 [5.1, 7.5] & 13.8 [11.4, 16.2] & -0.30 [-0.34, -0.25] & 235 \\
    \texttt{raw} & 9.0 [7.5, 10.5] & 17.3 [14.9, 19.7] & -0.44 [-0.49, -0.40] & 235 \\
    \bottomrule
  \end{tabular}
\end{table*}

%% file: generated/tables/downstream_direct_comparisons.tex
\begin{table*}[t]
  \caption{Paired downstream QLIKE comparisons with recursive reintegration for \texttt{StaFIR-context}
  (\%, equal-weight mean over four series with paired
  stationary-bootstrap marginal 95\% intervals). Each percentage uses the named
  comparator's QLIKE as its denominator. Negative values favor
  \texttt{StaFIR-context}. Bold intervals exclude zero. The raw AR(110) control matches the $K+10=110$ observations of raw history used by the $K=100$ FIR and its ten-lag head.
  }
  \label{tab:downstream-direct-comparisons}
  \centering
  \scriptsize
  \setlength{\tabcolsep}{4pt}
  \begin{tabular}{@{}rrrr@{}}
    \toprule
    Horizon & vs. \texttt{FD-context} & vs. \texttt{FD(0.5)} & vs. \texttt{raw AR(110)} \\
    \midrule
    $h=1$ & $-0.49\;[-0.98,\phantom{-}0.02]$ & $\phantom{-}0.06\;[-0.91,\phantom{-}1.06]$ & $\mathbf{-3.92\;[-5.35,-2.47]}$ \\
    $h=5$ & $\mathbf{-2.45\;[-4.54,-0.30]}$ & $-2.64\;[-6.15,\phantom{-}1.20]$ & $\mathbf{-8.24\;[-12.03,-4.05]}$ \\
    $h=20$ & $-2.54\;[-5.38,\phantom{-}0.31]$ & $-0.36\;[-5.53,\phantom{-}5.47]$ & $-4.46\;[-11.01,\phantom{-}1.98]$ \\
    $h=40$ & $-3.40\;[-6.32,\phantom{-}0.05]$ & $-2.41\;[-7.68,\phantom{-}5.20]$ & $-3.07\;[-9.15,\phantom{-}3.53]$ \\
    $h=80$ & $-2.72\;[-5.38,\phantom{-}0.46]$ & $-2.35\;[-6.76,\phantom{-}3.61]$ & $-2.43\;[-7.40,\phantom{-}2.40]$ \\
    \bottomrule
  \end{tabular}
\end{table*}

%% file: generated/tables/downstream_direct_head_sensitivity.tex
\begin{table*}[t]
  \caption{Direct-head sensitivity: QLIKE differences for
  \texttt{StaFIR-context} versus comparators (\%, marginal 95\%
  stationary-bootstrap intervals). Four series-level percentages, normalized
  by each comparator's mean QLIKE, are averaged equally.
  Negative values favor \texttt{StaFIR-context}; bold intervals exclude zero.
  Forecasts use ten transformed lags to predict raw log realized variance.}
  \label{tab:downstream-direct-head-sensitivity}
  \centering
  \footnotesize
  \setlength{\tabcolsep}{4pt}
  \renewcommand{\arraystretch}{0.85}
  \begin{tabular}{@{}rrrr@{}}
    \toprule
    Horizon & vs. \texttt{raw} & vs. \texttt{FD(0.5)} & vs. \texttt{FD-context} \\
    \midrule
    $h=1$ & $\boldsymbol{+25.51\;[+17.98,+35.07]}$ & $\boldsymbol{-43.84\;[-49.11,-37.47]}$ & $\boldsymbol{-7.18\;[-12.24,-1.54]}$ \\
    $h=5$ & $\boldsymbol{+8.35\;[+5.03,+12.19]}$ & $\boldsymbol{-15.91\;[-20.99,-10.17]}$ & $+0.62\;[-3.01,+4.50]$ \\
    $h=20$ & $\boldsymbol{+5.93\;[+3.35,+9.31]}$ & $\boldsymbol{-9.42\;[-13.52,-5.45]}$ & $-0.23\;[-2.08,+1.83]$ \\
    $h=40$ & $\boldsymbol{+4.39\;[+1.87,+7.35]}$ & $-3.83\;[-8.30,+0.21]$ & $+1.68\;[-0.14,+3.32]$ \\
    $h=80$ & $+0.57\;[-0.96,+1.86]$ & $-1.31\;[-3.76,+0.92]$ & $\boldsymbol{+0.85\;[+0.003,+1.87]}$ \\
    \bottomrule
  \end{tabular}
  \par\smallskip
  \begin{minipage}{0.95\textwidth}
  \scriptsize
  Intervals use 10{,}000 paired circular stationary-bootstrap resamples with
  expected block length 20 dates. The comparator denominator is recomputed
  in each draw. Intervals have marginal coverage for each horizon--comparator
  contrast, conditional on the four observed series. The positive
  lower bound $+0.003$ at $h=80$ against \texttt{FD-context} is shown to three
  decimals to avoid rounding it to zero.
  \end{minipage}
\end{table*}

%% file: generated/tables/synthetic_ablation_results.tex
\begin{table*}[t]
  \caption{Synthetic constraint-ablation evaluation. Scores are mean $\pm$ sample SD across seeds after averaging the scenario grid; test columns report pooled ADF and PP rejection rates and KPSS non-rejection rates (\%), conditional on successful fits. Fail counts unsuccessful fits over attempts. }
  \label{tab:synthetic-ablation-results}
  \centering
  \scriptsize
  \setlength{\tabcolsep}{4pt}
  \begin{tabular}{lrr!{\color{black!60}\vrule width 0.4pt}rrr!{\color{black!60}\vrule width 0.4pt}r}
    \toprule
    & & & \multicolumn{3}{c}{Test outcomes (\%)} & \\
    Method & Stationarity & Similarity & \shortstack{ADF\\rejection} & \shortstack{PP\\rejection} & \shortstack{KPSS\\non-rejection} & Fail \\
    \midrule
    \texttt{StaFIR-context} & 73.2 $\pm$ 0.9 & 88.2 $\pm$ 1.1 & 95 & 97 & 31 & 0/3000 \\
    \texttt{StaFIR-context-7-scale} & 73.3 $\pm$ 0.9 & 88.1 $\pm$ 1.1 & 96 & 97 & 32 & 0/3000 \\
    \texttt{StaFIR-context-free-lags} & 73.1 $\pm$ 0.9 & 88.1 $\pm$ 1.1 & 95 & 97 & 34 & 0/3000 \\
    \texttt{StaFIR-context-no-2nd-order} & 70.7 $\pm$ 0.8 & 91.6 $\pm$ 1.1 & 87 & 90 & 38 & 0/3000 \\
    \bottomrule
  \end{tabular}
\end{table*}

%% file: generated/tables/real_ablation_results.tex
\begin{table*}[t]
  \caption{Rolling log-price constraint ablations. Score and turnover entries
  are the equal-weight mean $\pm$ sample SD of the four per-series rolling
  means. Test entries are equal-weight means of the four per-series ADF
  unit-root rejection rates and KPSS stationarity non-rejection rates.
  Fail counts failed fits over all 235 attempted origins.
  }
  \label{tab:real-ablation-results}
  \centering
  \scriptsize
  \setlength{\tabcolsep}{4pt}
  \begin{tabular}{lrrr!{\color{black!60}\vrule width 0.4pt}rr!{\color{black!60}\vrule width 0.4pt}r}
    \toprule
    & & & & \multicolumn{2}{c}{Test outcomes (\%)} & \\
    Method & Stationarity & Similarity & Effect turnover $T_y$ & \shortstack{ADF\\rejection} & \shortstack{KPSS\\non-rejection} & Fail \\
    \midrule
    \texttt{StaFIR-context} & 66.3 $\pm$ 3.5 & 82.7 $\pm$ 5.5 & 0.725 $\pm$ 0.158 & 78 & 10 & 0/235 \\
    \texttt{StaFIR-context-no-2nd-order} & 64.4 $\pm$ 3.3 & 85.4 $\pm$ 5.2 & 0.984 $\pm$ 0.245 & 62 & 14 & 0/235 \\
    \texttt{StaFIR-context-7-scale} & 66.4 $\pm$ 3.5 & 82.3 $\pm$ 5.5 & 0.715 $\pm$ 0.171 & 80 & 17 & 0/235 \\
    \texttt{StaFIR-context-free-lags} & 66.1 $\pm$ 3.6 & 82.7 $\pm$ 5.4 & 0.734 $\pm$ 0.145 & 78 & 13 & 0/235 \\
    \bottomrule
  \end{tabular}
\end{table*}

%% file: generated/tables/selection_sensitivity_results.tex
\begin{table*}[t]
  \caption{Sensitivity of the contextual selection rule to the reference fraction
  $\zeta$ and the shortfall curvature $\gamma_{\mathrm{sel}}$. Each row
  averages the successful rolling origins of the indicated series. The asterisk marks the
  configuration used in the main experiments,
  $(\zeta,\gamma_{\mathrm{sel}})=(0.9,0.4)$. Stationarity and similarity
  are 0--100 scores. Joint test outcome is the share of origins with rejection
  of both ADF and PP unit-root nulls and non-rejection of the KPSS null.
  $T_y$ is mean effect turnover. Reference reached is the share of origins
  at which the selected candidate's validation score is at least $S_{\min}$.
  Score shortfall is the mean of
  $[S_{\min}-S_{\mathrm{stat}}(y^{\mathrm{val}})]_+$.
  }
  \label{tab:selection-sensitivity}
  \centering
  \scriptsize
  \setlength{\tabcolsep}{4pt}
  \begin{tabular}{@{}lrrrrrrrr@{}}
    \toprule
    & $\zeta$ & $\gamma_{\mathrm{sel}}$ & $S_{\mathrm{stat}}$ & $S_{\mathrm{sim}}$ & \shortstack{Joint test\\outcome} & $T_y$ & \shortstack{Reference\\reached} & \shortstack{Score\\shortfall} \\
    \midrule
    \multicolumn{9}{@{}l}{\textit{S\&P 500}} \\
     & 0.00 & 0.20 & 58.6 & 100.0 & 0.00 & 0.019 & 0.93 & 0.00 \\
     & 0.00 & 0.40 & 58.6 & 100.0 & 0.00 & 0.019 & 0.93 & 0.00 \\
     & 0.00 & 0.80 & 58.6 & 100.0 & 0.00 & 0.019 & 0.93 & 0.00 \\
     & 0.90 & 0.20 & 61.3 & 91.3 & 0.04 & 0.742 & 0.16 & 3.79 \\
    $\ast$ & 0.90 & 0.40 & 61.7 & 90.1 & 0.04 & 0.900 & 0.18 & 2.40 \\
     & 0.90 & 0.80 & 61.9 & 88.4 & 0.08 & 1.001 & 0.22 & 1.36 \\
     & 1.00 & 0.20 & 61.5 & 91.1 & 0.04 & 0.816 & 0.14 & 4.21 \\
     & 1.00 & 0.40 & 61.9 & 88.5 & 0.10 & 0.942 & 0.10 & 2.43 \\
     & 1.00 & 0.80 & 62.4 & 86.3 & 0.12 & 1.191 & 0.15 & 1.50 \\
    \multicolumn{9}{@{}l}{\textit{Bitcoin (XBX)}} \\
     & 0.00 & 0.20 & 56.2 & 100.0 & 0.00 & 0.016 & 1.00 & 0.00 \\
     & 0.00 & 0.40 & 56.2 & 100.0 & 0.00 & 0.016 & 1.00 & 0.00 \\
     & 0.00 & 0.80 & 56.2 & 100.0 & 0.00 & 0.016 & 1.00 & 0.00 \\
     & 0.90 & 0.20 & 66.2 & 84.7 & 0.06 & 0.768 & 0.06 & 3.99 \\
    $\ast$ & 0.90 & 0.40 & 69.0 & 77.5 & 0.12 & 0.818 & 0.18 & 1.72 \\
     & 0.90 & 0.80 & 68.6 & 77.0 & 0.18 & 1.501 & 0.24 & 1.19 \\
     & 1.00 & 0.20 & 67.6 & 80.0 & 0.12 & 0.806 & 0.03 & 4.21 \\
     & 1.00 & 0.40 & 68.7 & 77.1 & 0.18 & 0.959 & 0.09 & 2.95 \\
     & 1.00 & 0.80 & 69.1 & 73.8 & 0.21 & 1.226 & 0.18 & 1.76 \\
    \bottomrule
  \end{tabular}
\end{table*}

%% file: generated/tables/cost_results.tex
\begin{table*}[!ht]
  \caption{Deployment cost of the adaptive representations. Select and carried are median wall times for
  hyperparameter selection and a fixed-setting re-fit; inference is single-update
  p50/p95 latency. FIR path and numeric state are shared.
   CPU times are in seconds; peak RSS is absolute memory/increase above process start. Failures count unsuccessful selections.
  }
  \label{tab:cost}
  \centering
  \tiny
  \setlength{\tabcolsep}{2.5pt}
  \begin{tabular}{lrr!{\color{black!60}\vrule width 0.4pt}rr!{\color{black!60}\vrule width 0.4pt}rrrr!{\color{black!60}\vrule width 0.4pt}rr!{\color{black!60}\vrule width 0.4pt}rr}
    \toprule
    & \multicolumn{2}{c}{Wall adaptation} & Host inf. & State & \multicolumn{4}{c}{Adaptation CPU (s)} & \multicolumn{2}{c}{Peak RSS abs./+$\Delta$ (MiB)} & \multicolumn{2}{c}{Selection failures} \\
    Method & select (s) & carried (ms) & p50/p95 ($\mu$s) & (KiB) & select & carried & cycle & /origin & select & carried & Synthetic & Real \\
    \midrule
    \texttt{StaFIR-context} & 1.56 & 203 & 4.0/4.1 & 1.59 & 1.562 & 0.219 & 1.562 & 1.562 & 200.3/+15.96 & 196.6/+12.38 & 0/3000 & 0/469 \\
    \texttt{FD-adf} & 0.061 & $<$1 & 4.0/4.1 & 1.59 & 0.062 & 0.000 & 0.062 & 0.062 & 185.9/+1.00 & 184.9/+0.05 & 0/3000 & 0/469 \\
    \texttt{FD-context} & 0.12 & $<$1 & 4.0/4.1 & 1.59 & 0.125 & 0.000 & 0.125 & 0.125 & 188.2/+3.70 & 184.7/+0.04 & 0/3000 & 0/469 \\
    \bottomrule
  \end{tabular}
\end{table*}

%% file: generated/tables/refit_period_results.tex
\begin{table*}[t]
  \caption{Adaptation frequency across selection periods. Hyperparameters are selected every $M$ origins ($M{=}\infty$: first origin only); standardization and filters are re-fitted at every origin. Per-origin refit is amortized time; similarity and $T_y$ are mean $\pm$ sample SD over successful origins. Joint outcome requires ADF/PP rejection and KPSS non-rejection; selection change counts hyperparameter changes between consecutive origins (both percentages). }
  \label{tab:real-refit-period}
  \centering
  \scriptsize
  \setlength{\tabcolsep}{4pt}
  \begin{tabular}{lr!{\color{black!60}\vrule width 0.4pt}rr!{\color{black!60}\vrule width 0.4pt}rr}
    \toprule
    & & & & \multicolumn{2}{c}{Share of origins (\%)} \\
    Selection period & Per-origin refit (s) & Similarity & Effect turnover $T_y$ & \shortstack{Joint test\\outcome} & Selection change \\
    \midrule
    \multicolumn{6}{@{}l}{\textbf{\texttt{FD-adf}}} \\
    $M{=}1$ & 0.06 & 96.8 $\pm$ 3.2 & 0.099 $\pm$ 0.123 & 10 & 72 \\
    $M{=}2$ & 0.03 & 96.7 $\pm$ 3.2 & 0.064 $\pm$ 0.129 & 10 & 37 \\
    $M{=}4$ & 0.02 & 96.7 $\pm$ 3.2 & 0.048 $\pm$ 0.128 & 9 & 22 \\
    $M{=}8$ & 0.01 & 96.8 $\pm$ 3.1 & 0.034 $\pm$ 0.116 & 9 & 12 \\
    $M{=}\infty$ & 0.00 & 94.7 $\pm$ 4.3 & 0.000 $\pm$ 0.000 & 10 & 0 \\
    \midrule
    \multicolumn{6}{@{}l}{\textbf{\texttt{FD-context}}} \\
    $M{=}1$ & 0.12 & 82.3 $\pm$ 15.6 & 1.206 $\pm$ 3.074 & 26 & 55 \\
    $M{=}2$ & 0.06 & 83.4 $\pm$ 14.4 & 0.642 $\pm$ 1.968 & 25 & 30 \\
    $M{=}4$ & 0.03 & 84.8 $\pm$ 13.9 & 0.232 $\pm$ 1.079 & 22 & 12 \\
    $M{=}8$ & 0.02 & 84.3 $\pm$ 16.7 & 0.206 $\pm$ 1.070 & 23 & 8 \\
    $M{=}\infty$ & 0.00 & 69.6 $\pm$ 12.1 & 0.000 $\pm$ 0.000 & 39 & 0 \\
    \midrule
    \multicolumn{6}{@{}l}{\textbf{\texttt{StaFIR-context}}} \\
    $M{=}1$ & 1.56 & 83.5 $\pm$ 15.1 & 0.718 $\pm$ 1.362 & 10 & 47 \\
    $M{=}2$ & 0.88 & 82.4 $\pm$ 15.2 & 0.662 $\pm$ 1.738 & 11 & 29 \\
    $M{=}4$ & 0.54 & 83.4 $\pm$ 14.0 & 0.364 $\pm$ 0.757 & 10 & 13 \\
    $M{=}8$ & 0.37 & 83.5 $\pm$ 15.3 & 0.313 $\pm$ 0.613 & 13 & 10 \\
    $M{=}\infty$ & 0.20 & 69.6 $\pm$ 14.4 & 0.175 $\pm$ 0.177 & 26 & 0 \\
    \bottomrule
  \end{tabular}
\end{table*}

%% file: generated/tables/downstream_refit_period_results.tex
\begin{table*}[!ht]
  \caption{Paired stationary-bootstrap QLIKE contrasts for
  \texttt{StaFIR-context} at $M=1$ versus less frequent selection schedules.
  Entries show the percentage change in mean QLIKE at $M=1$ relative to each
  comparator schedule, averaged equally across series at each forecast horizon $h$.
  Brackets give 95\% intervals;
  negative values favor $M=1$.
  Bold intervals exclude zero.
  }
  \label{tab:downstream-refit-period}
  \centering
  \scriptsize
  \setlength{\tabcolsep}{3pt}
  \resizebox{\textwidth}{!}{%
  \begin{tabular}{lrrrrr}
    \toprule
    Comparator schedule & $h{=}1$ & $h{=}5$ & $h{=}20$ & $h{=}40$ & $h{=}80$ \\
    \midrule
    $M{=}2$ & $-0.20 [-0.47,0.08]$ & $\mathbf{-1.18 [-1.98,-0.37]}$ & $\mathbf{-2.35 [-4.29,-0.73]}$ & $-1.32 [-3.48,0.55]$ & $-0.35 [-2.27,2.04]$ \\
    $M{=}4$ & $\mathbf{-0.48 [-0.83,-0.11]}$ & $\mathbf{-2.17 [-3.18,-1.20]}$ & $\mathbf{-4.35 [-6.90,-2.35]}$ & $\mathbf{-3.25 [-6.34,-0.86]}$ & $-1.95 [-4.53,0.75]$ \\
    $M{=}8$ & $-0.38 [-0.85,0.07]$ & $-0.82 [-2.27,0.73]$ & $\mathbf{-3.26 [-5.64,-0.05]}$ & $\mathbf{-5.51 [-8.44,-0.89]}$ & $-3.38 [-5.88,0.19]$ \\
    $M{=}\infty$ & $\mathbf{-0.93 [-1.34,-0.51]}$ & $-1.95 [-3.78,0.01]$ & $\mathbf{-5.42 [-8.18,-2.78]}$ & $\mathbf{-4.70 [-7.60,-1.98]}$ & $-2.14 [-4.56,0.68]$ \\
    \bottomrule
  \end{tabular}
  }
\end{table*}